\documentclass[runningheads]{llncs}
\usepackage[T1]{fontenc}
\usepackage{graphicx}
\usepackage{booktabs}
\usepackage{amsmath}
\usepackage{amssymb}
\usepackage{mathtools}
\usepackage[misc]{ifsym}
\usepackage{algorithm}
\usepackage{algorithmic}
\usepackage{float}
\usepackage{bbm}

\newcommand{\restate}[2]{
    \par\vspace{\topsep}
    \noindent \textbf{#1 \ref{#2} (Restated).} \itshape
}
\usepackage{mwe}
\begin{document}

\title{Regret Analysis of Sleeping Competing Bandits}
\author{Shinnosuke Uba \and Yutaro Yamaguchi}
\authorrunning{Shinnosuke Uba and Yutaro Yamaguchi}
\institute{Osaka University, Japan.\\\email{yutaro.yamaguchi@ist.osaka-u.ac.jp}}

\maketitle              %

\begin{abstract}
The Competing Bandits framework is a recently emerging area that integrates multi-armed bandits 
in online learning with stable matching in game theory. While conventional models assume 
that all players and arms are constantly available, in real-world problems, 
their availability can vary arbitrarily over time. In this paper, we formulate this setting 
as \textit{Sleeping Competing Bandits}. To analyze this problem, we naturally extend 
the regret definition used in existing competing bandits and derive regret bounds for the proposed model.
We propose an algorithm that simultaneously achieves an asymptotic regret bound of $\mathrm{O}\left(NK\log T_{i}/\Delta^2\right)$ under reasonable assumptions,
where $N$ is the number of players, $K$ is the number of arms, $T_{i}$ is the number of rounds of each player $p_i$, and $\Delta$ is the minimum reward gap.
We also provide a regret lower bound of $\mathrm{\Omega}\left( N(K-N+1)\log T_{i}/\Delta^2 \right)$ under the same assumptions. 
This implies that our algorithm is asymptotically optimal in the regime where the number of arms $K$ is relatively larger than the number of players $N$.

\keywords{Multi-Armed Bandits \and Stable Matching \and UCB Algorithm.}
\end{abstract}

\section{Introduction}
The Multi-Armed Bandit (MAB) is a fundamental online learning paradigm that balances exploration and exploitation. MAB is broadly classified into \textit{adversarial}~\cite{auer1995gambling} and \textit{stochastic}~\cite{Robbins1952SomeAO,thompson1933likelihood} settings. Focusing on the latter, the standard objective is to minimize expected regret. Foundational results in this domain include theoretical lower bounds~\cite{lai1985asymptotically} and classic algorithms like Upper Confidence Bound (UCB)~\cite{auer2002finite}, Thompson Sampling (TS)~\cite{thompson1933likelihood}, and Explore-Then-Commit (ETC)~\cite{garivier2019explore,lattimore2020bandit} (see~\cite{honda2016bandit,lattimore2020bandit,slivkins2024introductionmultiarmedbandits,bubeck2012regret} for comprehensive surveys).

Parallel to bandit problems, stable matching~\cite{gale1962college} is a foundational game-theoretic model for two-sided markets (e.g., players and arms). A matching is \textit{stable} if no pair mutually prefers each other over their current assignments. The Gale--Shapley (GS) algorithm~\cite{gale1962college} finds such a matching and uniquely yields the optimal outcome for the proposing side. This framework has notable real-world applications, such as medical residency matching~\cite{roth1984evolution}.
Stable matching has also been studied in repeated settings~\cite{das2005two,johari2021matching}. 

Recently, the \textit{Competing Bandits}~\cite{liu2020competing} framework has been proposed to bridge these two fields. 
This problem analyzes a two-sided market where the preferences of players over arms are unknown. Through repeated matchings, players observe stochastic rewards and learn their preferences. A distinguishing feature of this model is that arms also possess preferences over players~\cite{boursier2024survey,liu2020competing};
when multiple players select the same arm, a collision occurs, and the arm accepts only the player it prefers most. 
To evaluate the performance, two notions of regret --- player-optimal stable regret and player-pessimal stable regret --- are defined, measuring the difference between the collected rewards and those in a stable matching. 
The goal is to achieve sublinear regret for all players.

Existing literature on Competing Bandits typically assume that all players and arms are available in every round. 
However, when modeling real-world applications --- such as the matching of couriers and orders on food delivery platforms --- it is crucial to address scenarios where availability fluctuates over time.
In the context of multi-armed bandits, this issue is addressed in the \textit{Sleeping Bandits}~\cite{kleinberg2010regret}.
A key feature of this setting is that it places no assumptions on the stochasticity of availability; rather, the subsets of available arms are determined arbitrarily by the environment in each round.
Extending this concept to Competing Bandits, in this paper, we introduce the \textit{Sleeping Competing Bandits}. 

We aim to define a metric of regret suitable for this dynamic setting, propose an algorithm that achieves a sublinear regret upper bound, and derive a fundamental lower bound on the regret incurred by any algorithm.

\subsection{Main Contributions}
Our main contributions are summarized as follows:

\begin{itemize}
    \item \textbf{Formulation of Sleeping Competing Bandits:} 
    We formulate a new problem setting where the availability of both players and arms can vary over time, extending the standard competing bandits framework. 
    We formally define the player-optimal stable regret $\overline{R}_i(T_i)$ and player-pessimal stable regret $\underline{R}_i(T_i)$ for this setting.
    Notably, in the special case where all arms and players are always available and the arms' preference rankings are fixed, 
    our definitions reduce to the standard competing bandits framework~\cite{liu2020competing}.

    \item \textbf{Regret Lower Bounds:} 
    We analyze the fundamental hardness of the problem. First, without specific assumptions, any policy cannot achieve strictly sublinear regret (Theorem \ref{thm:generallowerbound}). Second, even under reasonable assumptions, we derive a lower bound of 
    $\overline{R}_i(T_i) \geq \underline{R}_i(T_i) = \mathrm{\Omega}(N(K-N+1)\log T_i / \Delta^2)$ (Theorem \ref{thm:lowerbound}), where $\Delta$ represents the minimum reward gap.
    This bound is larger than that of the standard setting by a factor of $K$~\cite{sankararaman2021dominate}.

    \item \textbf{Regret Upper Bounds:} 
    We propose an algorithm that naturally extends the Centralized UCB algorithm~\cite{liu2020competing} to the sleeping setting.
    We prove that it achieves a player-pessimal stable regret $\underline{R}_i(T_i) = \mathrm{O}(NK\log T_i / \Delta^2)$ 
    under the same assumptions as in Theorem~\ref{thm:lowerbound} (Theorem \ref{thm:pessimal_upper_bound}). 
    This establishes the asymptotic optimality of our method in the regime where the number of arms $K$ is relatively larger than the number of players $N$. 
    Additionally, for the player-optimal stable regret $\overline{R}_i(T_i)$, we propose another algorithm that achieves an upper bound of $\mathrm{O}(NK^2\log T_i / \Delta^2)$ (Theorem \ref{thm:optimal_upper_bound}).
    This algorithm adaptively switches between exploration and exploitation rounds based on the criterion proposed in \cite{kong2023player}.
\end{itemize}

\section{Related Work}
\subsection{Single Player Multi-Armed Bandit}
While the standard MAB assumes that all arms are available in every round, several settings account for arm unavailability. 
Examples include \textit{sleeping bandits}~\cite{kleinberg2010regret}, where only a subset of arms is available in each round; 
\textit{mortal bandits}~\cite{chakrabarti2008mortal}, where arms become permanently unavailable after a certain duration; 
and \textit{blocking bandits}~\cite{basu2019blocking}, where a pulled arm becomes unavailable for a fixed period. 
In this study, we focus on the sleeping bandit setting. 

Another relevant extensions are \textit{Combinatorial Bandit} framework~\cite{chen2013combinatorial} and its sleeping extensions~\cite{chen2018contextual,li2019combinatorial} allow players to select a subset of arms (a super arm) in each round. Crucially, these models treat arms as passive resources. In contrast, our \textit{Sleeping Competing Bandits} framework addresses a two-sided market where arms are active entities with distinct preferences. Thus, our objective shifts from merely maximizing cumulative rewards to achieving a stable matching under dynamic availability.

\subsection{Multi-Player Bandits}
Extending the single-player model, the Multi-Player Bandits (MPB) problem involves multiple players selecting arms. As noted by Boursier and Perchet~\cite{boursier2024survey}, this framework was motivated by cognitive radio networks and initially proposed by Mitola and Maguire~\cite{mitola2002cognitive}. MPB is generally categorized into \textit{decentralized} and \textit{centralized} settings; our work aligns with the latter, where a central decision-maker coordinates selection.

While some MPB studies address dynamic environments~\cite{avner2014concurrent,boursier2019sic}, they typically assume constant arm availability, allowing only the player set to vary. Furthermore, their optimization goals and regret definitions differ from ours.

\subsection{Competing Bandits}\label{sec:related_work_competing_bandits}
Finally, we discuss \textit{Competing Bandits}, which constitutes the primary problem setting of this study. This model was first proposed by Liu, Mania, and Jordan~\cite{liu2020competing}. 
As noted in the survey on MPB~\cite{boursier2024survey}, the distinguishing feature of this model is that arms also possess preferences over players; when a collision occurs (i.e., multiple players select the same arm), only the player most preferred by the arm receives the reward. 
While the initial work~\cite{liu2020competing} considered a centralized setting coordinated by a platform, subsequent research has focused on decentralized settings where players independently select arms using the same algorithm~\cite{sankararaman2021dominate,basu2021beyond,zhang2022matching,kong2023player,kong2024improved,kong2022thompson,maheshwari2022decentralized}.
Despite the growing interest in decentralized settings, the centralized approach remains crucial for applications where a central platform (e.g., ride-hailing or food delivery apps) assigns tasks to agents.
It is worth noting that even within centralized or decentralized categories, problem settings --- such as what can be observed --- vary depending on the specific application.

\section{Proposed Model and Methods}\label{chap:Proposed model and methods}
\subsection{Basic Notation and Definitions}
Let $T$ be the number of rounds, and let $\mathcal{T} = \{1, 2, \dots, T\}$ denote the set of rounds.
Let $\mathcal{P} = \{p_1, p_2, \dots, p_N\}$ denote the set of $N$ players and $\mathcal{A} = \{a_1, a_2, \dots, a_K\}$ denote the set of $K$ arms.
We assume that $N\leq K$ without loss of generality.
Let $[n]$ denote the set $\{1, 2, \dots, n\}$ for a positive integer $n$.

At each round $t \in \mathcal{T}$, let $\mathcal{P}_t \subseteq \mathcal{P}$ and $\mathcal{A}_t \subseteq \mathcal{A}$ denote the sets of available players and arms, respectively.
For each player $p_i$, we define the set of rounds where the player is available as $H_i \coloneqq \{t \in \mathcal{T} \mid p_i \in \mathcal{P}_t\}$, and let $T_i = |H_i|$ be the total number of such rounds.
Let $h_i$ be the sequence constructed by sorting all elements of $H_i$ in ascending order. We denote by $t_i$ the local round index for player $p_i$, such that $h_i(t_i)$ corresponds to the global round index of the $t_i$-th round in which player $p_i$ is available.

At each round $t \in \mathcal{T}$, each arm $a_j\in \mathcal{A}_t$ possesses a strict preference ranking over the set of players $\mathcal{P}_t$, which we denote by $\succ_{j,t}$.
As a notable feature of our model, we emphasize that these preference rankings can vary across rounds.
However, it is assumed that the current preference ranking is fully known to the platform at the beginning of each round $t$.
Conversely, players' preferences are represented by mean rewards $\{\mu_{i,j}\}$, which are fixed and unknown to the platform; we assume $\mu_{i,j} \in (0,1)$ for all $i \in [N]$ and $j \in [K]$.
We say player $p_i$ prefers arm $a_j$ over $a_{j'}$ and denote it by $a_j \succ_i a_{j'}$ if $\mu_{i,j} > \mu_{i,j'}$.
To ensure strict preference rankings among the available arms and avoid ties at each round, distinct mean rewards are assumed for any pair of arms simultaneously present in the same round. Specifically, for any $t \in \mathcal{T}$ and any distinct pair $a_j, a_{j'} \in \mathcal{A}_t$, it holds that $\mu_{i,j} \neq \mu_{i,j'}$.

Furthermore, each arm $a_j \in \mathcal{A}$ has a capacity $c_j(t)$ for each round $t$, indicating the maximum number of players it can be matched with simultaneously in that round; this can also vary across rounds.

At each round $t \in \mathcal{T}$, the platform assigns an arm index $m_t(i) \in \{j \mid a_j \in \mathcal{A}_t\} \cup \{0\}$ to each available player $p_i \in \mathcal{P}_t$, where $m_t(i) = j$ implies that player $p_i$ is matched to arm $a_j$, and $m_t(i) = 0$ indicates that player $p_i$ is unmatched (i.e., receives no assignment).
Let $m_t^{-1}(j) \coloneqq \{i \mid p_i \in \mathcal{P}_t \text{ and } m_t(i) = j\}$ denote the set of indices of players matched to arm $a_j$ at round $t$.
The matching must satisfy the capacity constraint $|m_t^{-1}(j)| \leq c_j(t)$ for all $a_j \in \mathcal{A}_t$.
Subsequently, the platform observes a stochastic reward $r_{i, m_t(i)}(t)$ for each matched player-arm pair, which is drawn from the Bernoulli distribution with mean $\mu_{i, m_t(i)}$.
If $p_i$ is unmatched, the reward is zero, and we virtually define $\mu_{i, 0} \coloneqq 0$.

\subsection{Definitions of Regret}
Before introducing the regret definitions, we formally define the stability of a matching in our setting.

\begin{definition}[Stable Matching~\cite{gale1962college}]\label{def:stable_matching}
A matching $m_t$ at round $t$ is \emph{stable} if there exists no \emph{blocking pair}, defined as follows: %
Since $\mu_{i,j} > 0$, being matched to any arm is strictly preferred to being unmatched for players. 
A pair $(p_i, a_j) \in \mathcal{P}_t \times \mathcal{A}_t$ is a \emph{blocking pair} for $m_t$ if both of the following conditions hold:
\begin{enumerate}
    \item Player $p_i$ prefers $a_j$ over their current assignment by $m_t$ (i.e., $\mu_{i,j} > \mu_{i,m_t(i)}$).
    \item Arm $a_j$ either has available capacity (i.e., $|m_t^{-1}(j)| < c_j(t)$), or prefers $p_i$ over at least one player currently matched to it (i.e., there exists $i' \in m_t^{-1}(j)$ such that $p_i \succ_{j,t} p_{i'}$).
\end{enumerate}
\end{definition}

Similar to the competing bandits~\cite{liu2020competing}, we define the player-optimal stable regret $\overline{R}_i(T_i)$ and the player-pessimal stable regret $\underline{R}_i(T_i)$ in our problem setting.
It is well-known that there always exist two unique stable matchings $\overline{m}_t$ and $\underline{m}_t$ (which may coincide with each other) such that $a_{\overline{m}_t(i)}  \succeq_i a_{m_t(i)} \succeq_i a_{\underline{m}_t(i)}$ holds for any stable matching $m_t$ and any player $p_i$.
We call $\overline{m}_t$ and $\underline{m}_t$ the \emph{player-optimal} and \emph{player-pessimal} stable matchings at round $t$, respectively.
These matchings correspond to the outputs of the player-proposing and arm-proposing GS algorithms~(cf.~Algorithm~\ref{alg:gs_algorithm} in Appendix~\ref{sec:GS_Algorithm}), given the true preferences of players $\mathcal{P}_t$ 
and arms $\mathcal{A}_t$, and the arm capacities.

\begin{definition}[Player-Optimal/Pessimal Stable Regrets]\label{def:stable_regret_sleeping}
The \emph{player-optimal} and \emph{player-pessimal stable regrets} of player $p_i$ over $T$ rounds are defined as
\begin{align}
\overline{R}_i(T_i) &\coloneqq \sum_{t_i=1}^{T_i} \left( \mu_{i, \overline{m}_{h_i(t_i)}(i)} - \mathbb{E}\left[r_{i, m_{h_i(t_i)}(i)}(h_i(t_i))\right] \right),\\
\underline{R}_i(T_i) &\coloneqq \sum_{t_i=1}^{T_i} \left( \mu_{i, \underline{m}_{h_i(t_i)}(i)} - \mathbb{E}\left[r_{i, m_{h_i(t_i)}(i)}(h_i(t_i))\right] \right).
\end{align}
\end{definition}

Notably, in the special case where all arms and players are always available and the arms' preference rankings are fixed across rounds, our regret definitions reduce to the standard competing bandits framework presented in~\cite{liu2020competing}.

The platform's goal is to achieve (strictly) sublinear regret for all players.
We define the desirable property for the proposed algorithm.

\begin{definition}[$\alpha$-Consistency~\cite{salomon2013lower}]
\label{def:consistency_sleeping}
Let $\alpha \in [0, 1)$.
A policy is said to be \emph{$\alpha$-consistent} if, for any set of underlying probability distributions $\{P_{i,j}\}_{i \in [N], j \in [K]} \in \mathcal{D}^{N \times K}$ and any $c > \alpha$, the player-optimal regret $\overline{R}_i(T_i)$ or the player-pessimal regret $\underline{R}_i(T_i)$ is bounded by $\mathrm{o}(T_i^c)$ for all players $p_i \in \mathcal{P}$.
\end{definition}

Note that the case where $\alpha = 0$ corresponds to the usual consistency, and if $\alpha \ge 1$, it becomes meaningless because the regrets are always $\mathrm{O}(T_i)$ by definition.
Also, if $\beta < \alpha$, the $\beta$-consistency implies the $\alpha$-consistency by definition.

\subsection{Proposed Methods}
In this section, we propose an algorithm for the sleeping competing bandits problem introduced in this chapter.
Before presenting the proposed algorithm, we introduce the Upper Confidence Bound (UCB) and the Lower Confidence Bound (LCB), which are utilized in our method.

\subsubsection{UCB and LCB}
In the proposed algorithm, let $T_{i,j}(t_i)$ $(t_i = 1, 2, \dots, T_i)$ be the count of matches between player $p_i \in \mathcal{P}$ and arm $a_j \in \mathcal{A}$ prior to round $t_i$.
At each round $t_i$, the algorithm computes the UCB \eqref{eq:def_ucb} and the LCB \eqref{eq:def_lcb} for player $p_i$.
To ensure that every arm is explored at least once, we set $\mathrm{UCB}_{i,j}(t_i) = +\infty$ and $\mathrm{LCB}_{i,j}(t_i) = -\infty$ if $T_{i,j}(t_i) = 0$.
For $T_{i,j}(t_i) > 0$, they are defined as follows:
\begin{align}
    \mathrm{UCB}_{i,j}(t_i) &\coloneqq \hat{\mu}_{i,j}(t_i) +  \sqrt{\frac{\log t_i}{T_{i,j}(t_i)}}, \label{eq:def_ucb} \\
    \mathrm{LCB}_{i,j}(t_i) &\coloneqq \hat{\mu}_{i,j}(t_i) -  \sqrt{\frac{\log t_i}{T_{i,j}(t_i)}}, \label{eq:def_lcb}
\end{align}
where $\hat{\mu}_{i,j}(t_i)$ represents the empirical mean reward of arm $a_j$ observed by player $p_i$ prior to round $t_i$.

\subsubsection{Awake Centralized UCB Algorithm}
Here, we present the Awake Centralized UCB Algorithm (AC-UCB). This is a natural extension of the Centralized UCB algorithm proposed in \cite{liu2020competing} to the sleeping competing bandits setting.

In each round $t \in \mathcal{T}$, the platform performs the following:
\begin{algorithm}[H]
\caption{AC-UCB Algorithm}\label{alg:ac_ucb_algorithm}
\mbox{}\\
\textbf{Input:} Sets of available players $\mathcal{P}_t$ and arms $\mathcal{A}_t$, along with the arms' preference rankings and capacities $\{(\succ_{j,t}, c_j(t))\}_{a_j \in \mathcal{A}_t}$.
\begin{enumerate}
  \item For each player $p_i \in \mathcal{P}_t$, construct a preference ranking $\sigma_i(t_i)$ over the available arms $\mathcal{A}_t$ by sorting them according to the UCB indices $\mathrm{UCB}_{i,j}(t_i)$ in descending order. Ties are broken arbitrarily.
  \item Execute the player-proposing GS algorithm on the stable matching instance $(\mathcal{P}_t, \mathcal{A}_t, \{\sigma_i(t_i)\}_{p_i \in \mathcal{P}_t}, \{(\succ_{j,t}, c_j(t))\}_{a_j \in \mathcal{A}_t})$ and obtain matching $m_t$.
  \item For each player $p_i \in \mathcal{P}_t$ who is matched to an arm (i.e., $m_t(i) \neq 0$), observe the stochastic reward $r_{i, m_t(i)}(t)$, update the empirical mean $\hat{\mu}_{i, m_t(i)}(t_i+1)$ and count $T_{i, m_t(i)}(t_i+1)$. The parameters for unmatched players remain unchanged.
\end{enumerate}
\end{algorithm}

\subsubsection{Awake Centralized Explore-Then-Gale--Shapley Algorithm}
Here, we propose the Awake Centralized Explore-Then-Gale--Shapley (AC-ETGS) Algorithm. This algorithm separates the process into \textit{exploration rounds}, where random matching is performed, and \textit{exploitation rounds}, where the player-optimal stable matching is computed.
The criterion for switching between exploration and exploitation is based on the Explore-Then-Gale--Shapley Algorithm proposed in \cite{kong2023player}.

In each round $t \in \mathcal{T}$, the platform performs the following:
\begin{algorithm}[H]
\caption{AC-ETGS Algorithm}\label{alg:ac_etgs_algorithm}
\mbox{}\\
\textbf{Input:} Sets of available players $\mathcal{P}_t$ and arms $\mathcal{A}_t$, along with the arms' preference rankings and capacities $\{(\succ_{j,t}, c_j(t))\}_{a_j \in \mathcal{A}_t}$.
\begin{enumerate}
  \item Calculate $\mathrm{UCB}_{i,j}(t_i)$ and $\mathrm{LCB}_{i,j}(t_i)$ for every pair of $p_i \in \mathcal{P}_t$ and $a_j \in \mathcal{A}_t$. %
  
  \item Check if the preference ordering for every player is determined with high confidence. Specifically, check if for every player $p_i \in \mathcal{P}_t$, there exists a permutation $\sigma_i$ of available arms such that for all $k \in [|\mathcal{A}_t|-1]$, the condition $\mathrm{LCB}_{i, \sigma_i(k)}(t_i) > \mathrm{UCB}_{i, \sigma_i(k+1)}(t_i)$ holds.
  
  \item If the condition in Step 2 holds (Exploitation Round):
  \begin{itemize}
      \item Construct the preference ranking $\sigma_i(t_i)$ for each player by sorting the available arms according to $\mathrm{UCB}_{i,j}(t_i)$ in descending order.
      \item Execute the player-proposing GS algorithm on the stable matching instance $(\mathcal{P}_t, \mathcal{A}_t, \{\sigma_i(t_i)\}_{p_i \in \mathcal{P}_t}, \{(\succ_{j,t}, c_j(t))\}_{a_j \in \mathcal{A}_t})$ and obtain matching $m_t$.
  \end{itemize}
  
  \item Otherwise (Exploration Round):
  \begin{itemize}
      \item Construct a random matching $m_t$ consisting of $\min(|\mathcal{P}_t|, |\mathcal{A}_t|)$ pairs uniformly at random, treating the capacity of every arm $a_j \in \mathcal{A}_t$ as 1.
  \end{itemize}

  \item For each player $p_i \in \mathcal{P}_t$ who is matched to an arm (i.e., $m_t(i) \neq 0$), observe the stochastic reward $r_{i, m_t(i)}(t)$, update the empirical mean $\hat{\mu}_{i, m_t(i)}(t_i+1)$ and count $T_{i, m_t(i)}(t_i+1)$. The parameters for unmatched players remain unchanged.
\end{enumerate}
\end{algorithm}

\section{Regret Lower Bounds}\label{sec:lower}
This section investigates the regret lower bounds for the sleeping competing bandits model.
First, we show that without any specific assumptions, there is no $\alpha$-consistent policy for any $\alpha \in [0, 1)$. %
\begin{theorem}\label{thm:generallowerbound}
 For any policy and any constant $c \in (0, 1)$, there exists a problem instance (a collection of reward distributions) such that the player-optimal stable regret $\overline{R}_i(T_i)$ and player-pessimal stable regret $\underline{R}_i(T_i)$ for some player $p_i$ satisfies:
 \[
     \overline{R}_i(T_i)\geq \underline{R}_i(T_i) = \omega(T_i^c).
 \]
\end{theorem}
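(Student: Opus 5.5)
We will prove the statement by an adversarial-availability construction in which an arm that is ``important'' for player $p_i$ shows up only very rarely. As a result, the platform cannot learn about it before it matters. The setting is a single player $p_1$ ($N=1$), always available, so $T_1=T$, and capacities equal to $1$. Since $N=1$, stable matchings are trivial: the unique stable matching assigns $p_1$ to its best available arm. Hence $\overline{m}_t=\underline{m}_t$, and the inequality $\overline{R}_1\ge \underline{R}_1$ holds with equality. In general, $\overline{R}_i\ge\underline{R}_i$ follows from $\mu_{i,\overline{m}_t(i)}\ge\mu_{i,\underline{m}_t(i)}$ termwise.

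Fix $c\in(0,1)$ and pick $\beta\in(c,1)$. We use countably many ``fresh'' arms in blocks, or equivalently $K$ growing with $T$. If $K$ must be fixed, we instead reuse two arms whose identities are swapped between two hypotheses.

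The concrete plan uses two arms $a_1,a_2$. The rounds are split into epochs. In epoch $k$, the arm set is $\{a_1,a_2\}$ for exactly one round, the probe round. It is then $\{a_2\}$ alone, or $\{a_1\}$ alone, for a long stretch of length $L_k$. In a round where only one arm is available, the matching is forced; $p_1$ is matched and suffers zero regret.

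To make regret unavoidable, we swap the roles of the arms. We compare instance $\nu$ with $\mu_{1,1}=1/2+\Delta$, $\mu_{1,2}=1/2$ against instance $\nu'$ with $\mu_{1,1}=1/2-\Delta$. In both, the comparison rounds (where both arms are available) are the only rounds incurring regret. Informative samples of $a_1$ come only from probe rounds in which $a_1$ is chosen, or from rounds where $a_1$ is forced.

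The key design is that availability alone never reveals which instance holds. To guarantee that the number of comparison rounds is $\omega(T^c)$ while information stays limited, we instead take all rounds to be comparison rounds. We then choose the gap $\Delta=\Delta_T$ depending on $T$. This is allowed, since the theorem only requires that for each policy there exists some instance. Concretely, take $\Delta_T=T^{-(1-\beta)/2}$. A standard two-point (Le Cam / Bretagnolle--Huber) argument gives $\max(R_\nu,R_{\nu'})\gtrsim \Delta_T\,T\,\exp(-T\Delta_T^2)$. We tune this to get $\Omega(\sqrt{T})$, which is not enough for $c>1/2$.

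This is the main obstacle: with stationary stochastic arms, minimax regret is $\Theta(\sqrt T)$. So sleeping must be exploited to push the exponent to any $c<1$. The fix is to make the comparison round informative only in expectation over the instance. Arms $a_1$ and $a_2$ appear together only in $T^{\beta}$ rounds. In all other rounds, the available arm is a decoy $a_3$, whose availability is forced and which provides no information about $a_1,a_2$.

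The argument still needs to bound the number of observations of $a_1$. The plan is to choose the comparison rounds adversarially, placing them after the policy's behavior is fixed. For a deterministic-availability adversary facing a fixed policy, we define $\mu$ per player and per arm only after seeing the policy. Concretely, for $M=T^\beta$ comparison rounds with gap $\Delta$, any policy incurs $\Omega(\sqrt{M})$ regret. To beat this, we use many disjoint fresh arm pairs $(a_{2k-1},a_{2k})$. Each pair appears in only $O(1)$ comparison rounds, so nothing is learned about it. Each pair then contributes constant expected regret $\Delta/2$ under a uniformly random choice of which arm is better, and averaging over this random assignment gives an instance with regret $\ge \Delta M/2=\Omega(T^\beta)=\omega(T^c)$.

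The crux to verify is that the paper's model permits the arm set (or its effective identity) to scale, or else that labels can be hidden. Fresh pairs can be emulated with a fixed $K=2$ because the availability pattern is arbitrary and the means may be chosen per instance. Checking this emulation is the step I expect to need care, and the first thing I would try is the randomized-instance averaging argument.
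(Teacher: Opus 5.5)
Your last construction (single player, fresh arm pairs, random labelling) is a valid proof, and it takes a genuinely different route from the paper's. At the first appearance of pair $k$, no sample of that pair has been seen, so the platform's choice is independent of its labelling. It therefore picks the worse arm, or leaves $p_1$ unmatched, with probability at least $1/2$. By averaging, some labelling gives $\overline{R}_1(T)=\underline{R}_1(T)\ge \Delta M/2=\omega(T^c)$. You may as well take $M=T$ and get linear regret. Choosing each pair's label sequentially against the fixed policy also yields a single instance valid for all horizons.

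Before writing it up, delete the claim that fresh pairs can be emulated with $K=2$; it is false. With $N=1$ and bounded $K$, the means are fixed across rounds, so the problem is a stochastic sleeping bandit. On the good event, awake UCB pays at most $2\sqrt{\log T/n_k}$ per pull of arm $k$, hence regret $\mathrm{O}(\sqrt{KT\log T})$ on every instance under arbitrary availability. So no single-player, bounded-$K$ construction can work for any $c>1/2$, and your argument genuinely needs $K$ to grow with $T$.

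That is legitimate here. The statement imposes no restriction on $K$, and Theorems~\ref{thm:lowerbound} and~\ref{thm:pessimal_upper_bound} add $K=\mathrm{O}(\log T_i)$ as an explicit extra hypothesis. The paper's own proof also uses roughly $T^c$ players, and since $N\le K$ is assumed, at least that many arms. The probe-round, $T$-dependent-gap, and single-pair-plus-decoy detours are dead ends and can be cut.

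The paper instead keeps only two active arms and makes the \emph{players} fresh. A new competitor, preferred by both arms over $p_1$, arrives every $L=\Theta(T^{1-c})$ rounds. The change-of-measure bound (Theorem~\ref{thm:lower_bound_suboptimal_arm}) forces any competitor with $\mathrm{O}(L^c)$ regret on both Instance~1 and Instance~$k$ to pull $a_1$ $\Omega(\log L)$ times. Each such pull blocks $p_1$, for a total of $\Omega(T^c\log T)$.

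The paper's route buys a diagnosis: inconsistency arises from competition under very unbalanced availability, which is exactly what the hypothesis $T_i\approx T_j$ later excludes. Your route needs no competition, no KL computation, and no side condition on the competitors' regret, and it gives a far stronger (linear) bound. However, it attributes the hardness purely to $K$ growing polynomially in $T$, which the hypothesis $K=\mathrm{O}(\log T_i)$ already excludes. So it says nothing about why $T_i\approx T_j$ is needed.
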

\begin{proof}
Here we provide a sketch of the proof; the complete proof is available in Appendix~\ref{ap:general_lowerbound}.
We construct a hard instance where a fixed target player faces a sequence of fresh opponents.
Every $L$ rounds, a new competing player arrives, participates alongside the target player for this duration, and then leaves.
There are only two active arms, and both of them prefer the current competitor over the target player.
Thus, the target player's optimal arm depends entirely on the competitor's choice.
Since each competitor faces a fresh bandit instance, if $L = \omega(1)$, they must incur $\mathrm{\Omega}(\log L)$ exploration steps to identify their preferred arm for bounding their regret by $\mathrm{O}(L^c)$.
Crucially, these exploration rounds by the competitor correspond to the target player's optimal arm, causing the target player to be blocked.
Since this exploration occurs for every new competitor, the target player suffers regret $\mathrm{\Omega}((T/L) \cdot \log L) = \mathrm{\Omega}(T^c \log T) = \omega(T^c)$ if we take $L = \mathrm{\Theta}(T^{1-c}) = \omega(1)$. \qed
\end{proof}

Next, we present a more refined lower bound under appropriate assumptions.
Let $\Delta$ be the minimum difference in mean rewards between any distinct pair of available arms across all players and rounds, defined as follows:
$$\Delta = \min_{p_i \in \mathcal{P}} \min_{t \in \mathcal{T}: p_i \in \mathcal{P}_t} \min_{\substack{a_j, a_{j'} \in \mathcal{A}_t \\ j \neq j'}} |\mu_{i,j} - \mu_{i,j'}|.$$
Recall that we assume $N \le K$.
\begin{theorem}\label{thm:lowerbound}
Assume $T_i \approx T_j$ for all players $p_i, p_j \in \mathcal{P}$ and $K=\mathrm{O}(\log T_i)$. 
Then, for any $\alpha \in [0, 1)$ and any $\alpha$-consistent policy, there exists a problem instance such that the player-optimal
stable regret $\overline{R}_i(T_i)$ and player-pessimal stable regret $\underline{R}_i(T_i)$ for some player $p_i$ satisfy:
\[
\overline{R}_i(T_i)\geq \underline{R}_i(T_i) = \mathrm{\Omega}\left( N(K-N+1)\log T_{i}/\Delta^2 \right).
\]
\end{theorem}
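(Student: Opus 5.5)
We prove Theorem~\ref{thm:lowerbound} by building a hard instance out of arm availability, reducing each player's problem to a family of two-armed sleeping subproblems, and applying the Lai--Robbins/Salomon-type change-of-measure argument for $\alpha$-consistent policies~\cite{lai1985asymptotically,salomon2013lower}. The construction runs in phases, with at most $N$ players present. In a phase dedicated to player $p_i$ and a pair of arms $(a_j,a_{j'})$, only $p_i$ and these two arms are awake, or more precisely $p_i$ plus the other players each served by a private dedicated arm. Both arms have capacity~1 and prefer $p_i$ over everyone else. Then $p_i$'s player-optimal and player-pessimal stable partners coincide: both are $p_i$'s better arm among the two. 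Hence $\overline{R}_i=\underline{R}_i$ on these rounds, and every pull of the worse arm costs at least $\Delta$. Choosing means $\mu_{i,j}=1/2$, $\mu_{i,j'}=1/2-\Delta$ makes this gap exactly $\Delta$, and it is the minimum gap as defined before the theorem.

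To reach the factor $N(K-N+1)$, we give each player its own ``confusable'' arms. Each player $p_i$ is assigned $K-N+1$ arms: $N-1$ arms are reserved (one dedicated per other player, which keeps the other players stably matched and never contested), and the remaining $K-N+1$ arms are the suboptimal candidates for $p_i$. For each $p_i$ and each candidate arm $a_j$, we schedule rounds in which $p_i$ must choose between its reference best arm and $a_j$. The schedule cycles through the $N(K-N+1)$ pairs so that every pair recurs $\Theta(T_i/(N(K-N+1)))$ times. The hypothesis $T_i\approx T_j$ makes the local horizons comparable across players, and $K=\mathrm{O}(\log T_i)$ ensures each pair recurs $T_i^{\Omega(1)}$ times, so $\log(\text{recurrences})=\Theta(\log T_i)$.

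The core step treats each (player, candidate arm) pair separately with a change of measure. Perturb $\mu_{i,j}$ to $1/2+\Delta$, so that $a_j$ becomes $p_i$'s stable partner in the rounds where the pair appears. Only $p_i$'s observations of $a_j$ differ between the two instances, and the KL divergence per observation is $\mathrm{kl}(1/2-\Delta,1/2+\Delta)=\Theta(\Delta^2)$. Write $n$ for the number of occurrences of the pair. By $\alpha$-consistency in both instances, the regret is $\mathrm{o}(n^{c})$ for every $c>\alpha$. The standard divergence inequality (Bretagnolle--Huber / data processing) then gives
\[
\mathbb{E}[T_{i,j}]\ \ge\ \frac{(1-\alpha-\mathrm{o}(1))\log n}{\mathrm{kl}(\mu_{i,j},\mu_{i,j}+2\Delta)}=\Omega\!\left(\frac{\log T_i}{\Delta^2}\right).
\]
Each such pull costs $\Delta$ in $p_i$'s regret, so $p_i$ accumulates $\Omega((K-N+1)\log T_i/\Delta)$.

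This per-player argument gives only a factor $(K-N+1)/\Delta$, so we still need $N$ and the second power of $\Delta$. We expect this to be the main obstacle: matching the stated $N(K-N+1)\log T_i/\Delta^2$ requires coupling players. The plan is to make the rounds in which $p_j$ errs also costly for a designated player $p_i$. We place $p_i$ as the less-preferred contender for $p_j$'s optimal arm, as in the proof of Theorem~\ref{thm:generallowerbound}, so that $p_j$'s suboptimal exploration frees or blocks arms that shift $p_i$'s stable partner. This concentrates all $N$ players' explorations onto one player's regret. To get the $1/\Delta^2$ scaling, we use gaps of constant size for the blocked player's loss while keeping the competitors' distinguishing gap $\Delta$. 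In that case the statement should be read with $\Delta$ as the confusion gap, and the loss per blocked round is $\Omega(1)$. Verifying that the player-pessimal matching, and not just the player-optimal one, is hurt in each blocked round is the delicate check. We will handle it by making the instance have a unique stable matching in each round.
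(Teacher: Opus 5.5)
Your final paragraph has the right idea, and it is the paper's: make every competitor's forced explorations land on one victim who loses a constant per blocked round, while the competitors' confusion gap is $\Delta$. But you leave it as a plan, and the explicit instance is where the proof actually lives. Your first three paragraphs cannot give the statement, as you yourself note. A per-player Lai--Robbins argument yields only $\Omega((K-N+1)\log T_i/\Delta)$ for each player, because each forced exploration costs the explorer just $\Delta$ and regrets of different players are never summed.

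The paper's instance is as follows.
\begin{itemize}
\item All $N$ players are always awake, so $T_i=T$ for every $i$.
\item Home arms $a_1,\dots,a_{N-1}$ are always awake.
\item Exactly one of the variable arms $a_N,\dots,a_K$ is awake per round, each for about $T/(K-N+1)$ rounds. The logarithm of this is $\Theta(\log T)$ because $K=\mathrm{O}(\log T)$.
\item All capacities are $1$.
\item Every arm ranks $p_1\succ\cdots\succ p_N$. This master list makes the stable matching unique in each round, so $\overline{R}_N=\underline{R}_N$.
\item Competitor $p_i$ ($i<N$) has $\mu_{i,i}=0.5$, $\mu_{i,k}=0.5-\Delta$ on variable arms, and less elsewhere.
\item The victim $p_N$ has $\mu_{N,k}\ge 0.6$ on variable arms and $\le 0.5$ on home arms.
\end{itemize}
The stable matching sends $p_i\mapsto a_i$ and $p_N$ to the awake variable arm. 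Raising $\mu_{i,k}$ slightly above $0.5$ makes $p_i$ take $a_k$ on $\mathcal{T}_k$ and pushes $p_N$ to $a_i$. The change of measure then gives $\mathbb{E}[T_{i,k}]=\Omega(\log T/\Delta^2)$ for each of the $(N-1)(K-N+1)$ pairs, and each such pull costs $p_N$ at least $0.1$. Only one capacity-$1$ variable arm is awake per round, so these blocked rounds are disjoint and add up in $p_N$'s regret. This gives $(N-1)(K-N+1)$, which is $\Omega(N(K-N+1))$ for $N\ge 2$.

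Several features of this design are exactly what your sketch leaves open or gets backwards:
\begin{itemize}
\item The one-variable-arm-per-round rotation is essential. If several contested arms were awake at once, the victim would be stably matched to only one of them, and explorations of the others would not hurt it. You would lose the $K-N+1$ factor.
\item The contested arm must be one the competitor is forced to explore, i.e.\ gap-$\Delta$ suboptimal for it in the reference instance, and at the same time the victim's stable partner. If the victim is the less-preferred contender for the competitor's reference-optimal arm, as you write, the competitor's explorations free that arm and can only help the victim.
\item The private, always-awake home arms keep competitors from blocking each other. They also give the victim a fallback that is a constant worse.
\item No reinterpretation of $\Delta$ is needed. $\Delta$ is a minimum over co-available gaps, the victim's gaps are constants, and the competitors' home-versus-variable gap is exactly $\Delta$.
\end{itemize}
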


\begin{proof}
    To establish the lower bound in Theorem \ref{thm:lowerbound}, we employ a standard change-of-measure argument. We construct a reference instance, denoted as \textbf{Instance 1}, and a family of alternative instances $\{\textbf{Instance }(i, k)\}_{i, k}$, one for each competing player $p_i$ and each variable arm $a_k$.

    \textbf{Common Settings.} Consider a set of $N$ players $\mathcal{P} = \{p_1, \dots, p_N\}$ and a set of $K$ arms $\mathcal{A} = \{a_1, \dots, a_K\}$, with $N \le K$. All players are available in all rounds $t \in \mathcal{T}$.
    We partition the set of arms into two subsets:
    \begin{itemize}
        \item \textbf{Fixed Arms ($\mathcal{A}_{\text{fix}}$):} Let $\mathcal{A}_{\text{fix}} = \{a_1, \dots, a_{N-1}\}$. These arms are available in every round $t \in \mathcal{T}$.
        \item \textbf{Variable Arms ($\mathcal{A}_{\text{var}}$):} Let $\mathcal{A}_{\text{var}} = \{a_N, \dots, a_K\}$. In each round $t$, exactly one arm from $\mathcal{A}_{\text{var}}$ is available.
    \end{itemize}
    We assume unit capacity for all arms, i.e., $c_j(t) = 1$ for all $j, t$.

    Let $\mathcal{A}_t \subseteq \mathcal{A}$ be the set of available arms at round $t$. By construction, $\mathcal{A}_t = \mathcal{A}_{\text{fix}} \cup \{a_{j_t}\}$ for some $a_{j_t} \in \mathcal{A}_{\text{var}}$.
    For each variable arm $a_j \in \mathcal{A}_{\text{var}}$, let $\mathcal{T}_j = \{ t \in \mathcal{T} \mid a_j \in \mathcal{A}_t \}$ denote the rounds where it is available. We ensure a balanced schedule such that $|\mathcal{T}_j| \ge \lfloor T / |\mathcal{A}_{\text{var}}| \rfloor = \lfloor T / (K-N+1) \rfloor$ for all $a_j \in \mathcal{A}_{\text{var}}$.

    For any round $t$ and any available arm $a_j$, the arm's preference over players is fixed and strictly hierarchical based on player indices:
    \begin{equation}
        p_1 \succ_{j} p_2 \succ_{j} \cdots \succ_{j} p_N.
        \notag %
    \end{equation}
    This ordering places player $p_N$ at the lowest priority, putting them at a disadvantage in any contention.

    \textbf{Unknown Player Preferences.} We set the mean rewards $\mu_{i,j}$ such that the gap parameter $\Delta > 0$ is sufficiently small (relative to $0.1$), ensuring that the difference between any pair of distinct arm values is at least $\Delta$ in the reference instance.

    \textbf{Instance 1 (Reference Instance).} In this instance, preferences are designed such that the ``victim'' player $p_N$ prefers any variable arm, while each competing player $p_i$ ($i < N$) prefers their specific fixed arm $a_i$.
    Specifically, for player $p_N$, we set $\mu_{N,j} \ge 0.6$ if $a_j \in \mathcal{A}_{\text{var}}$ and $\mu_{N,j} \le 0.5$ otherwise.
    For each competing player $p_i$ ($i < N$), the reward structure is defined as follows:
    $$\mu_{i,j} =
    \begin{cases}
         0.5  & \text{if } j = i \quad (\text{Optimal Fixed Arm}),\\
         0.5-\Delta & \text{if } a_j \in \mathcal{A}_{\text{var}},\\
        < 0.5-2\Delta & \text{otherwise}.
    \end{cases}$$
    Consequently, in any round $t \in \mathcal{T}_k$ where a variable arm $a_k$ is available, the unique stable matching is $\underline{m}_t = \{(p_N, a_k)\} \cup \{(p_j, a_j) \mid 1 \le j < N \}$, where every player is matched with their most preferred available arm.

    \textbf{Instance $(i, k)$ (Alternative Instance).}
    For a target competing player $p_i$ ($i < N$) and a target variable arm $a_k \in \mathcal{A}_{\text{var}}$, we construct an alternative instance by flipping $p_i$'s preference to favor $a_k$ over $a_i$.
    Let $\epsilon$ be a small constant satisfying $0 < \epsilon < 0.1\Delta$. We increase the mean reward for the pair $(p_i, a_k)$ to $\mu'_{i,k} = 0.5 + \epsilon$, while keeping all other mean rewards identical to those in Instance 1.
    Under these modified preferences, the unique stable matching in round $t \in \mathcal{T}_k$ shifts to $\underline{m}'_t = \{(p_i, a_k), (p_N, a_i)\} \cup \{(p_j, a_j) \mid 1 \le j < N, \ j \neq i\}$. 
    Crucially, in this matching, the victim $p_N$ is displaced from the variable arm $a_k$ and forced to match with the fixed arm $a_i$.
    
    \textbf{Regret Analysis.} We analyze the regret of player $p_N$ in Instance 1.
    Ideally, in Instance 1, any competing player $p_i$ should select $a_i$. However, to distinguish Instance 1 from Instance $(i,k)$, $p_i$ must explore $a_k$.

    Let $P_{i,k}(1)$ and $P_{i,k}(i,k)$ be the reward distributions observed by player $p_i$ (when pulling arm $a_k$) in Instance 1 and Instance $(i,k)$, respectively.
    If a policy is consistent (in particular, achieves sublinear regret on both Instance 1 and Instance $(i,k)$), 
    player $p_i$ must select $a_k$ a sufficient number of times in Instance 1 because the distributions of arm $a_k$ for $p_i$ 
    are close in KL-divergence between the two instances (see Appendix~\ref{sec:KL_divergence} for more details).

    Whenever $p_i$ selects $a_k$ in Instance 1, $p_N$ is blocked and incurs a regret at least $0.1$.
    Thus, the total regret of $p_N$ is lower bounded by summing the expected number of suboptimal selections by each $p_i$:
    \begin{align}
    \underline{R}_N(T_N) &= \sum_{t=1}^{T_N} \left( \mu_{N, \underline{m}_{t}(N)} - \mathbb{E}[r_{N, m_{t}(N)}(t)] \right) \notag  \\
    &\geq  0.1 \sum_{i=1}^{N-1} \sum_{k=N}^K \mathbb{E} [T_{i, k}(|\mathcal{T}_k|)] \label{eq:4.8} \\
    &\geq  0.1 \sum_{i=1}^{N-1} \sum_{k=N}^K\frac{(1-c)\log |\mathcal{T}_k|-\log(2C) + \log(\epsilon/4)}{D_{\mathrm{KL}}(P_{i,k}(1) \| P_{i,k}(i,k))} \label{eq:4.9} \\
    &=\mathrm{\Omega}(N(K-N+1)\log T_N/\Delta^2), \label{eq:4.10}
    \end{align}
    where $T_{i, k}(|\mathcal{T}_k|) = \sum_{t \in \mathcal{T}_k} \mathbbm{1}\{m_{t}(i) = a_k\}$ is the number of times $p_i$ selects $a_k$. 
    \eqref{eq:4.9} follows from a standard change-of-measure argument with $\underline{R}_k(|\mathcal{T}_k|) = \mathrm{O}(|\mathcal{T}_k|^c)$ for $c \coloneqq (1 + \alpha) / 2 \in (\alpha, 1)$ by the $\alpha$-consistency of the policy.
    The detailed derivations are deferred to Appendix~\ref{ap:lowerbound}. \qed

\end{proof}

\section{Regret Upper Bounds}\label{sec:upper}
In this section, we analyze the performance of the proposed methods AC-UCB (Algorithm~\ref{alg:ac_ucb_algorithm}) and AC-ETGS (Algorithm~\ref{alg:ac_etgs_algorithm}).
We first present the player-pessimal stable regret $\underline{R}_i(T_i)$ (Definition~\ref{def:stable_regret_sleeping}) upper bound for the AC-UCB algorithm and then the player-optimal stable regret $\overline{R}_i(T_i)$ (Definition~\ref{def:stable_regret_sleeping}) upper bound for the AC-ETGS algorithm. 

Let $\Delta_{i,\max} = \max_{a_j \in \mathcal{A}} \mu_{i, j}$ be the maximum regret that may be suffered by player $p_i$ in any round.
\begin{theorem}\label{thm:pessimal_upper_bound}
  Assume $T_i \approx T_j$ for all players $p_i, p_j \in \mathcal{P}$ and $K=\mathrm{O}(\log T_i)$. 
When using Algorithm~\ref{alg:ac_ucb_algorithm}, the player-pessimal stable regret for any player $p_i$ is bounded as
\begin{align*}
    \underline{R}_i(T_i) =\mathrm{O}(NK\log T_{i}/\Delta^2).
\end{align*}
\end{theorem}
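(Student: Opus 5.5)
We follow the analysis of Centralized UCB in \cite{liu2020competing}, adapted to rounds that depend on availability. The plan has four steps: a per-round regret decomposition, a structural lemma about stability, a counting argument over all players, and a good-event bound.

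\textbf{Step 1: per-round decomposition.} Fix player $p_i$ and a round $t=h_i(t_i)$. Let $\underline{m}_t$ be the player-pessimal stable matching for the true preferences on $(\mathcal{P}_t,\mathcal{A}_t)$. Let $m_t$ be the player-optimal stable matching for the UCB rankings $\{\sigma_{i'}(t_{i'})\}$ and the true arm rankings $\succ_{j,t}$. We show that if $m_t(i)\neq \underline{m}_t(i)$ and $p_i$ ends up worse than $\underline{m}_t(i)$, then $m_t$ is unstable with respect to the true player preferences. (If $m_t$ were truly stable, every player would weakly prefer it to $\underline{m}_t$, so $p_i$ would incur no pessimal regret.) Instability means there is a blocking pair $(p_{i'},a_j)$ with $\mu_{i',j}>\mu_{i',m_t(i')}$ and with $a_j$ willing to accept $p_{i'}$. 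Since $m_t$ is stable under the UCB rankings, $p_{i'}$ must rank $a_{m_t(i')}$ above $a_j$ by UCB. So some player $p_{i'}$ is matched to an arm $a_{j'}=a_{m_t(i')}$ satisfying $\mathrm{UCB}_{i',j'}\ge \mathrm{UCB}_{i',j}$ while $\mu_{i',j}\ge\mu_{i',j'}+\Delta$. We call such a round an \emph{erroneous pull} of $(i',j')$. This gives
\[
\underline{R}_i(T_i)\le \Delta_{i,\max}\,\mathbb{E}\bigl[\#\{t\in H_i:\ \text{some } p_{i'}\in\mathcal{P}_t \text{ makes an erroneous pull at } t\}\bigr].
\]
Unmatched players with capacity issues are covered by the same argument, since condition~2 of Definition~\ref{def:stable_matching} already treats free capacity as acceptance.

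\textbf{Step 2: counting erroneous pulls.} Summing over players, the count above is at most $\sum_{i'}\sum_{j'}\mathbb{E}[\#\text{erroneous pulls of }(i',j')]$. An erroneous pull of $(i',j')$ at local time $t_{i'}$ requires one of three things: $\mathrm{UCB}_{i',j}(t_{i'})<\mu_{i',j}$, or $\hat\mu_{i',j'}$ deviates from $\mu_{i',j'}$ by more than $\Delta/2$, or $T_{i',j'}(t_{i'})< 4\log t_{i'}/\Delta^2$. The third case can occur at most $O(\log T_{i'}/\Delta^2)$ times per pair $(i',j')$. This holds regardless of which competitor arm $a_j$ is involved, because each erroneous pull increments $T_{i',j'}$. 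Summed over $N$ players and $K$ arms, the third case contributes $O(NK\log T/\Delta^2)$. Using $T_{i'}\approx T_i$, this becomes $O(NK\log T_i/\Delta^2)$.

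\textbf{Step 3: bad events.} The first two cases are failures of the confidence bounds. By Hoeffding's inequality with a union bound over the sample count, each has probability $O(t_{i'}^{-2})$ per $(i',j)$, up to the bonus constant. Summing over $t_{i'}$ and the $K$ arms gives $O(NK)$ in total, which is absorbed into the main term.

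\textbf{Main obstacle.} The crux is Step 1 in the sleeping setting. We must verify that blocking pairs relative to true preferences are exactly captured by UCB misorderings of $p_{i'}$ between the two arms $a_j$ and $a_{j'}$, even though the arm rankings and capacities change every round. We must also verify that a matching stable under the true player preferences never harms $p_i$ relative to $\underline{m}_t$; this is the lattice property, which holds pointwise at each round. A second subtlety is converting global rounds into local indices. Player $p_{i'}$'s confidence width uses $\log t_{i'}$, and its errors charge $p_i$ only on rounds in $H_i\cap H_{i'}$. Bounding these charges by $p_{i'}$'s total error count and using the assumption $T_{i'}\approx T_i$ gives the $\log T_i$ factor. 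The assumption $K=O(\log T_i)$ is used only to absorb additive $O(NK)$ terms.
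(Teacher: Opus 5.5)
Your proof is correct, and its skeleton matches the paper's. You bound $\underline{R}_i(T_i)$ by $\Delta_{i,\max}$ times the number of rounds in $H_i$ where $m_t$ is unstable for the true preferences. You observe that, because $m_t$ is stable for the UCB lists and the arm side is identical under both profiles, every true blocking pair is a UCB misordering by a \emph{matched} player. The paper isolates the unmatched case as a separate lemma; it is excluded because every available arm is acceptable on every UCB list. You then finish with a UCB count.

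The difference is in the counting. The paper union-bounds over blocking triplets $(p_j,a_k,a_{k'})$ and proves $\mathbb{E}[L_{j,k,k'}(T_j)]\le 4+4\log T_j/\Delta_{j,k,k'}^2$ for each. It then collapses the triple sum in two ways: it uses $\sum_{k'\in\mathcal{W}'_{j,k}}\Delta_{j,k,k'}^{-2}\le 2\Delta^{-2}$ for the logarithmic part, and $K=\mathrm{O}(\log T_i)$ to absorb the additive $\mathrm{O}(NK^2)$ part. You instead charge each bad round to the pair $(p_{i'},a_{j'})$ actually pulled, and lower-bound every relevant gap by the uniform $\Delta$. Since the insufficient-sampling event depends only on $T_{i',j'}$, its $4\log T/\Delta^2$ budget is paid once per pair rather than once per triplet. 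Confidence failures of the competitor arm are likewise paid once per (player, arm).

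This buys two things. First, you never need $K=\mathrm{O}(\log T_i)$. Second, you avoid the spacing inequality. That inequality tacitly requires the arms of $\mathcal{W}'_{j,k}$ to be $\Delta$-separated from one another, which the sleeping definition of $\Delta$ does not guarantee for arms that are never simultaneously available. For example, take two groups of arms whose means differ by $\Delta$ and that only ever appear in cross pairs; then $\sum_k\sum_{k'\in\mathcal{W}'_{j,k}}\Delta_{j,k,k'}^{-2}$ is of order $K^2/\Delta^2$. The triplet route, in exchange, follows \cite{liu2020competing} directly and keeps a finer per-triplet, instance-dependent bound before it is collapsed to $\Delta$.

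One correction to your Step 3. With the algorithm's radius $\sqrt{\log t_{i'}/T_{i',j}(t_{i'})}$, the union bound over sample counts gives $\mathrm{O}(1/t_{i'})$ per round and per (player, arm), not $\mathrm{O}(t_{i'}^{-2})$. So the underestimation term is $\mathrm{O}(NK\log T_i)$ rather than $\mathrm{O}(NK)$. The overestimation term is $\mathrm{O}(NK/\Delta^2)$ if you index by sample count. Both are still absorbed into $\mathrm{O}(NK\log T_i/\Delta^2)$ since $\Delta<1$, so your conclusion stands.
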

\begin{proof}

We fix any player $p_i$ and focus on their player-pessimal stable regret. 
Let $\mathcal{S}_{h_i(t_i)}$ denote the set of all stable matchings at round $h_i(t_i)$ --- where player $p_i$ participates --- determined by the set of available players and arms, the arm capacities, and the true preferences of both sides. 
In this setting, the player-pessimal stable regret is upper-bounded by the number of times the platform's assignment does not belong to $\mathcal{S}_{h_i(t_i)}$:

\begin{align}
    \underline{R}_i(T_i) 
    &= \sum_{t_i =1}^{T_i} \mathbb{E}\left[ \mu_{i, \underline{m}_{h_i(t_i)}(i)} - r_{i, m_{h_i(t_i)}(i)}(h_i(t_i)) \right] \nonumber \\
    &\leq \Delta_{i,\max} \cdot \mathbb{E}\left[ \sum_{t_i=1}^{T_i} \mathbbm{1}\{ m_{h_i(t_i)} \notin \mathcal{S}_{h_i(t_i)} \} \right]. \label{eq:5.1}
\end{align}

Following the approach in~\cite{liu2020competing}, we bound the expected number of unstable assignments in \eqref{eq:5.1} by analyzing the existence of blocking triplets.

Unlike the standard competing bandits setting, in the sleeping setting, the number of available players $\mathcal{P}_t$ is not necessarily less than or equal to the total capacity of arms $\mathcal{A}_t$ in each round.
Therefore, we extend the definition of a blocking triplet to account for unmatched players and arm capacities.
Let $a_0 = \emptyset$ denote the virtual arm representing unmatched, where each unmatched player $p_i$ (with $m(i) = 0$) is regarded as being matched with this $a_0$ with mean reward $\mu_{i,0} = 0$; we assume $a_0 \in \mathcal{A}_t$ for all $t \in [T]$.

\begin{definition}[Blocking Triplet (Extended)]\label{def:blocking_triplet_extended}
    A triplet $(p_j, a_k, a_{k'})$, where $k \in [K]$ and $k' \in [K] \cup \{0\}$, is defined as a \emph{blocking triplet} for a matching $m$ if $p_j$ is matched with $a_{k'}$ (i.e., $m(j)=k'$) and the pair $(p_j, a_k)$ forms a blocking pair (cf.~Definition~\ref{def:stable_matching}).
    Specifically, this requires both of the following conditions:
    \begin{itemize}
        \item Player $p_j$ prefers $a_k$ over their current assignment $a_{k'}$ (i.e., $\mu_{j,k} > \mu_{j,k'}$).
        \item Arm $a_k$ has available capacity, or $a_k$ prefers $p_j$ over its least-preferred current partner.
    \end{itemize}
\end{definition}

By definition, a matching $m$ is not stable if and only if there exists at least one blocking triplet. 
Using this extended definition, we obtain:
\begin{align}
  &\mathbb{E}\left[ \sum_{t_i=1}^{T_i} \mathbbm{1}\{ m_{h_i(t_i)} \notin \mathcal{S}_{h_i(t_i)} \} \right] \nonumber \\
  &= \mathbb{E}\left[ \sum_{t_i=1}^{T_i} \mathbbm{1}\left\{ \exists (p_j, a_k, a_{k'}) \text{: a blocking triplet for } m_{h_i(t_i)} \right\} \right], \label{eq:blocking_existence}
\end{align}
where the existence is taken over $p_j \in \mathcal{P}_{h_i(t_i)}$, $a_k \in \mathcal{A}_{h_i(t_i)}$, and $a_{k'} \in \mathcal{A}_{h_i(t_i)} \cup \{\emptyset\}$.

Let $L_{j,k,k'}(T_j)$ denote the total number of rounds where player $p_j$ is matched with arm $a_{k'}$ while $(p_j, a_k, a_{k'})$ constitutes a blocking triplet for the matching $m_{h_j(t_j)}$.
We define $\mathcal{W}_{j,k}$ and $\mathcal{W}'_{j,k}$ as follows:
\begin{align*}
\mathcal{W}_{j,k} &\coloneqq \left\{ k' \in [K] \cup \{0\} \mid \mu_{j,k'} < \mu_{j,k} \land \exists t \in H_j \text{ s.t. } \{a_k, a_{k'}\} \subseteq \mathcal{A}_t \right\},\\
\mathcal{W}'_{j,k} &\coloneqq \left\{ k' \in [K] \mid \mu_{j,k'} < \mu_{j,k} \land \exists t \in H_j \text{ s.t. } \{a_k, a_{k'}\} \subseteq \mathcal{A}_t \right\}.
\end{align*}
Using the union bound, \eqref{eq:blocking_existence} is upper-bounded as follows:
\begin{align} &\mathbb{E}\left[ \sum_{t_i=1}^{T_i} \mathbbm{1}\left\{ \exists (p_j, a_k, a_{k'}) \text{: a blocking triplet for } m_{h_i(t_i)} \right\} \right] \nonumber \\
  &\leq \sum_{j=1}^N \sum_{k=1}^K \sum_{k' \in \mathcal{W}_{j,k}} \mathbb{E}\left[ L_{j,k,k'}(T_j) \right] \label{eq:5.4} \\
  &= \sum_{j=1}^N \sum_{k=1}^K \sum_{k' \in \mathcal{W}'_{j,k}} \mathbb{E}\left[ L_{j,k,k'}(T_j) \right]. \label{eq:5.5}
\end{align}
The transition from~\eqref{eq:5.4} to~\eqref{eq:5.5} follows from the fact that the AC-UCB algorithm (Algorithm~\ref{alg:ac_ucb_algorithm}) eliminates the case where $a_{k'} = \emptyset$ (see Lemma~\ref{lem:output of GS} in Appendix~\ref{ap:pessimal} for the complete proof).

Next, we bound $\mathbb{E}\left[ L_{j,k,k'}(T_j) \right]$ for each triplet $(p_j, a_k, a_{k'})$ where $\mu_{j,k'} < \mu_{j,k}$.
Let $\Delta_{j,k,k'} = \mu_{j,k} - \mu_{j,k'} > 0$ be the gap between the mean rewards of arms $a_k$ and $a_{k'}$ for player $p_j$, defined for any pair that is simultaneously available in a round involving $p_j$.
A necessary condition for $(p_j, a_k, a_{k'})$ to be a blocking triplet (specifically, for $p_j$ not to propose to $a_k$ or to prefer $a_{k'}$ based on indices) is that the estimated UCB index of the suboptimal arm $a_{k'}$ exceeds that of the optimal arm $a_k$.
Thus, in any round $t_j$ where such a blocking triplet exists, the inequality $\text{UCB}_{j,k}(t_j) \leq \text{UCB}_{j,k'}(t_j)$ must hold.
We have:
\begin{align}
\mathbb{E}\left[ L_{j,k,k'}(T_j) \right] 
&= \mathbb{E}\left[ \sum_{t_j=1}^{T_j} \mathbbm{1}\left\{ \text{UCB}_{j,k}(t_j) \leq \text{UCB}_{j,k'}(t_j) \land m_{h_j(t_j)}(j) = a_{k'} \right\} \right] \label{eq:expected_L} \\
&\leq 4 + 4\frac{\log T_j}{\Delta_{j,k,k'}^2}, \label{eq:ucb_bound_result}
\end{align}
where \eqref{eq:ucb_bound_result} follows from the standard UCB analysis~\cite{auer2002finite} (see Lemma~\ref{lem:standard_ucb_analysis} in Appendix~\ref{ap:pessimal} for the complete proof).

Now, we finalize the proof of Theorem~\ref{thm:pessimal_upper_bound}. Combining \eqref{eq:5.1}--\eqref{eq:ucb_bound_result}, we obtain:
\begin{align}
    \underline{R}_i(T_i) 
    &\leq \Delta_{i,\max} \cdot \sum_{j=1}^N \sum_{k=1}^K \sum_{k' \in \mathcal{W}'_{j,k}} \left( 4 + 4\frac{\log T_j}{\Delta_{j,k,k'}^2} \right) \nonumber \\
    &\leq \Delta_{i,\max} \cdot \sum_{j=1}^N \sum_{k=1}^K \left( 4K + \sum_{k' \in \mathcal{W}'_{j,k}} 4\frac{\log T_j}{\Delta_{j,k,k'}^2} \right).
\end{align}
Using that $\sum_{k' \in \mathcal{W}'_{j,k}} \Delta_{j,k,k'}^{-2} \leq \sum_{\ell=1}^{K-1}(\ell\Delta)^{-2} \leq 2\Delta^{-2}$ and assuming $T_i \approx T_j$ for all players $p_i, p_j \in \mathcal{P}$, we arrive at:
\begin{align}
    \underline{R}_i(T_i) 
    &\leq \Delta_{i,\max} NK \left( 4K + 8\log T_i/\Delta^2 \right)= \mathrm{O}\left( NK\log T_i/\Delta^2 \right),
\end{align} 
where $K = \mathrm{O}(\log T_i)$ is used in the last transformation. \qed
\end{proof}

While the previous method achieves a sublinear player-pessimal stable regret $\underline{R}_i(T_i)$, it cannot guarantee a sublinear upper bound for the player-optimal stable regret $\overline{R}_i(T_i)$ (\cite[Example 4]{liu2020competing} gives such an example). 
In what follows, we analyze the player-optimal stable regret $\overline{R}_i(T_i)$ for the AC-ETGS algorithm.

Before stating the theorem, we introduce several lemmas, whose proofs are provided in Appendix~\ref{ap:optimal}.
We define the failure event $F_i(t_i)$ for player $p_i$ at round $t_i$, representing that the empirical mean deviates significantly from the true mean for some player $p_j$ joining at the same global round $h_i(t_i)$:
\begin{equation}
    F_i(t_i) = \left \{ \exists (p_{j}, a_k) \in X_{h_i(t_i)} \text{ s.t. } |\hat{\mu}_{j,k}(t_{j}) - \mu_{j,k}| > \sqrt{\frac{\log t_{j}}{T_{j,k}(t_{j})}} \right \},
\end{equation}
where $X_{h_i(t_i)} \coloneqq \mathcal{P}_{h_i(t_i)} \times \mathcal{A}_{h_i(t_i)}$ is the set of pairs $(p_j, a_k)$ and $t_j \coloneqq h_j^{-1}(h_i(t_i))$.

\begin{lemma}\label{lem:badevent}
The expected number of rounds where $F_i(t_i)$ occurs is bounded by:
\[
\mathbb{E}\left[\sum_{t_i=1}^{T_i}\mathbbm{1}\{F_i(t_i)\}\right]\leq 4NK.
\]
\end{lemma}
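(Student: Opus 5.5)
We will bound the indicator of $F_i(t_i)$ by a union over the at most $NK$ pairs $(p_j,a_k)\in X_{h_i(t_i)}$ and then control each pair with a Hoeffding bound and a union over possible sample counts.

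\textbf{Union over pairs.} First,
\[
\mathbb{E}\Bigl[\sum_{t_i=1}^{T_i}\mathbbm{1}\{F_i(t_i)\}\Bigr]\le \sum_{j=1}^N\sum_{k=1}^K \mathbb{E}\Bigl[\sum_{t_i:\,p_j\in\mathcal{P}_{h_i(t_i)},\,a_k\in\mathcal{A}_{h_i(t_i)}}\mathbbm{1}\Bigl\{|\hat\mu_{j,k}(t_j)-\mu_{j,k}|>\sqrt{\tfrac{\log t_j}{T_{j,k}(t_j)}}\Bigr\}\Bigr],
\]
where $t_j=h_j^{-1}(h_i(t_i))$. The map $t_i\mapsto t_j$ is injective on these rounds, since distinct global rounds give distinct local indices of $p_j$. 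So the inner sum is at most the analogous sum over $t_j\in[T_j]$. It then suffices to show, for each fixed pair $(j,k)$,
\[
\sum_{t_j\ge1}\Pr\Bigl(|\hat\mu_{j,k}(t_j)-\mu_{j,k}|>\sqrt{\log t_j/T_{j,k}(t_j)}\Bigr)\le 4.
\]
If $T_{j,k}(t_j)=0$, the confidence bounds are $\pm\infty$. We adopt the convention that the deviation event does not occur in that case, consistent with the infinite UCB/LCB.

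\textbf{Hoeffding per pair.} The count $T_{j,k}(t_j)$ is random and depends on past rewards, so we use the standard reward-table (stack) construction. Let $\bar\mu_{j,k,s}$ be the mean of the first $s$ i.i.d.\ Bernoulli draws of pair $(j,k)$. Then $\hat\mu_{j,k}(t_j)=\bar\mu_{j,k,T_{j,k}(t_j)}$ with $1\le T_{j,k}(t_j)\le t_j-1$. A union over $s$ together with Hoeffding's inequality gives
\[
\Pr(\cdot)\le\sum_{s=1}^{t_j-1}2\exp\bigl(-2s\cdot \log t_j/s\bigr)=2(t_j-1)t_j^{-2}\le 2/t_j .
\]

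\textbf{Main obstacle.} The bound $2/t_j$ is not summable. So the obstacle is to sharpen this step enough to obtain a constant (the target $4$ per pair) rather than a logarithmic sum. Two fixes are available:
\begin{itemize}
\item[(a)] Use the radius as in \cite{auer2002finite}, i.e.\ work with $\sqrt{2\log t_j/s}$ or effectively $\exp(-4\log t_j)$. Then each term is $2t_j^{-4}$, and summing over $s\le t_j$ and $t_j$ gives $\sum_t 2t^{-3}\le 4$. This is the cleanest route if the factor in the width is read as in the standard UCB1 analysis that the paper already invokes in Lemma~\ref{lem:standard_ucb_analysis}.
\item[(b)] Keep the stated radius and instead count deviation events per sample size $s$. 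A deviation at count $s$ can persist only while $T_{j,k}$ stays at $s$, but $t_j$ grows, which shrinks the probability.
\end{itemize}
I would first try (a), matching the constant $4$ to $\sum_t 2t^{-3}\le 4$ (or to $2\sum_t t^{-2}\le \pi^2/3<4$). Summing over the $NK$ pairs then yields the claimed $4NK$.
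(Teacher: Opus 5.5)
Your reduction to a single pair matches the paper's first step: the union over the at most $NK$ pairs in $X_{h_i(t_i)}$, followed by the injective reindexing $t_i\mapsto t_j=h_j^{-1}(h_i(t_i))$. The paper then writes the per-round probability as $\sum_{\tau}\mathbb{P}(T_{j,k}(t_j)=\tau)\,\mathbb{P}(\,\cdot\mid T_{j,k}(t_j)=\tau)$ and applies Hoeffding's inequality to the conditional probability. That gives $2t_j^{-2}$ per round and $\sum_{t_j}2t_j^{-2}\le 4$ per pair. You are right not to do this. AC-ETGS chooses matchings from past empirical means, so $\hat\mu_{j,k}(t_j)$ conditioned on $\{T_{j,k}(t_j)=\tau\}$ is not a mean of $\tau$ i.i.d.\ draws. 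But the substitute you actually carry out, the union over $s\le t_j-1$ on the reward stack, yields only $2/t_j$ per round, i.e.\ $O(\log T_j)$ per pair, and the proposal stops there. As it stands, it does not prove the lemma.

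Neither suggested repair closes the gap.

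Fix (a) proves a bound for a different event. $F_i(t_i)$ is defined with the radius $\sqrt{\log t_j/T_{j,k}(t_j)}$, the same radius as in the UCB/LCB indices of AC-ETGS, on which Lemmas~\ref{lem:UCB,LCB} and~\ref{lem:times} depend. Widening it to $\sqrt{2\log t_j/T_{j,k}(t_j)}$ changes the algorithm and the threshold $\bar T_j$; it is not a reading of the given statement.

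Fix (b) is only a heuristic, and its natural implementation still loses a logarithm. Write $D_s=|\bar\mu_{j,k,s}-\mu_{j,k}|$. A deviation at a round $t\ge s+1$ with count $s$ forces $e^{sD_s^2}>t$, so at most $(e^{sD_s^2}-s)^+$ rounds deviate at count $s$. Hoeffding gives $\mathbb{E}[(e^{sD_s^2}-s)^+]\le\int_s^\infty 2u^{-2}\,du=2/s$, and summing over $s$ gives order $\log T_j$ again.

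The missing ingredient is a deviation bound uniform in $s$, in place of a union bound over $s$. Let $S_s$ be the sum of the first $s$ stack entries. For fixed $t$, the event is contained in $\{\exists s<t:\ |S_s-s\mu_{j,k}|>\sqrt{s\log t}\}$. Split $s$ into blocks $[\gamma^m,\gamma^{m+1})$ with a fixed $\gamma\in(1,2)$. On each block, apply Hoeffding's maximal inequality $\mathbb{P}(\max_{s\le n}(S_s-s\mu_{j,k})\ge a)\le e^{-2a^2/n}$, which is Doob's inequality applied to $e^{\lambda(S_s-s\mu_{j,k})}$. Then:
\begin{itemize}
\item each block costs at most $2t^{-2/\gamma}$;
\item there are $O(\log t)$ blocks;
\item $\sum_t t^{-2/\gamma}\log t<\infty$ because $2/\gamma>1$.
\end{itemize}
This gives a universal constant per pair, hence an $O(NK)$ bound. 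That is all Theorem~\ref{thm:optimal_upper_bound} needs, though the constant comes out larger than $4$.
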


\begin{lemma}
\label{lem:UCB,LCB}
Conditioned on the event $\neg F_i(t_i)$, for any player $p_{j} \in \mathcal{P}_{h_i(t_i)}$ at round $t_i$, if $\mathrm{UCB}_{j,k}(t_{j}) < \mathrm{LCB}_{j,k'}(t_{j})$, then the true means satisfy $\mu_{j,k} < \mu_{j,k'}$.
\end{lemma}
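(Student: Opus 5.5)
Since Lemma~\ref{lem:UCB,LCB} is conditioned on $\neg F_i(t_i)$, every pair $(p_j,a_k)\in X_{h_i(t_i)}$ satisfies the concentration bound. The proof will simply sandwich the true means between the confidence bounds and compare. Fix a player $p_j\in\mathcal{P}_{h_i(t_i)}$ and arms $a_k,a_{k'}$. The statement only makes sense for arms available at that round, since the UCB/LCB comparison in AC-ETGS is only performed over $\mathcal{A}_{h_i(t_i)}$, so we take $a_k,a_{k'}\in\mathcal{A}_{h_i(t_i)}$. Then $(p_j,a_k)$ and $(p_j,a_{k'})$ both lie in $X_{h_i(t_i)}$. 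Throughout, $t_j=h_j^{-1}(h_i(t_i))$.

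First I would dispose of the degenerate case where some count vanishes. If $T_{j,k}(t_j)=0$, then $\mathrm{UCB}_{j,k}(t_j)=+\infty$, so the hypothesis $\mathrm{UCB}_{j,k}(t_j)<\mathrm{LCB}_{j,k'}(t_j)$ cannot hold. Similarly, if $T_{j,k'}(t_j)=0$, then $\mathrm{LCB}_{j,k'}(t_j)=-\infty$ and the hypothesis again fails. So the statement is vacuous in both cases, and we may assume both counts are positive. Then the deviation condition in $F_i(t_i)$ is well defined for both pairs.

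In the main case, $\neg F_i(t_i)$ gives $|\hat{\mu}_{j,k}(t_j)-\mu_{j,k}|\le\sqrt{\log t_j/T_{j,k}(t_j)}$. By definition \eqref{eq:def_ucb}, this means
\[
\mu_{j,k}\le \hat{\mu}_{j,k}(t_j)+\sqrt{\log t_j/T_{j,k}(t_j)}=\mathrm{UCB}_{j,k}(t_j).
\]
Symmetrically, applying the same bound to $(p_j,a_{k'})$ and using \eqref{eq:def_lcb} gives
\[
\mathrm{LCB}_{j,k'}(t_j)=\hat{\mu}_{j,k'}(t_j)-\sqrt{\log t_j/T_{j,k'}(t_j)}\le\mu_{j,k'}.
\]
Chaining these with the hypothesis yields
\[
\mu_{j,k}\le\mathrm{UCB}_{j,k}(t_j)<\mathrm{LCB}_{j,k'}(t_j)\le\mu_{j,k'},
\]
which is the desired strict inequality.

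There is no real obstacle here. The only points needing care are the bookkeeping ones: making sure both pairs belong to $X_{h_i(t_i)}$, so that the event $\neg F_i(t_i)$ actually covers them, and making sure the infinite-valued conventions for unexplored arms do not break the argument.
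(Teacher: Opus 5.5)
Your proof is correct and follows the paper's argument: under $\neg F_i(t_i)$ you sandwich the true means inside their confidence intervals and chain $\mu_{j,k}\le\mathrm{UCB}_{j,k}(t_j)<\mathrm{LCB}_{j,k'}(t_j)\le\mu_{j,k'}$. Your explicit treatment of the zero-count case (where the hypothesis cannot hold) and your check that both pairs lie in $X_{h_i(t_i)}$ spell out points the paper leaves implicit under ``any relevant arm''.
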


\begin{lemma}
\label{lem:times}
Consider player $p_i$ at round $t_i$. For any player $p_{j} \in \mathcal{P}_{h_i(t_i)}$, let $T_{j}(t_j) = \min_{k \in \mathcal{A}_{h_i(t_i)}} T_{j,k}(t_{j})$. Define the threshold $\bar{T}_{j} = 16\log T_{j} / \Delta^2$.
Conditioned on the event $\neg F_i(t_i)$, if $T_{j}(t_j) > \bar{T}_{j}$, then for any pair of arms $k, k'$ such that $\mu_{j,k} < \mu_{j,k'}$, the inequality $\mathrm{UCB}_{j,k}(t_{j}) < \mathrm{LCB}_{j,k'}(t_{j})$ holds.
\end{lemma}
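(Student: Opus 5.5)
We prove Lemma~\ref{lem:times} by combining the bounds implied by $\neg F_i(t_i)$ with the lower bound on the sample counts. Fix $p_j \in \mathcal{P}_{h_i(t_i)}$ and arms $a_k, a_{k'} \in \mathcal{A}_{h_i(t_i)}$ with $\mu_{j,k} < \mu_{j,k'}$.

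\emph{Step 1: confidence intervals contain the true means.} Both arms are available at round $h_i(t_i)$, so $\mu_{j,k}\neq\mu_{j,k'}$, and by the definition of $\Delta$ we have $\mu_{j,k'} - \mu_{j,k} \ge \Delta$. Since $T_{j,k}(t_j), T_{j,k'}(t_j) \ge T_j(t_j) > \bar T_j > 0$, both indices are finite. On $\neg F_i(t_i)$, with $w_\ell \coloneqq \sqrt{\log t_j / T_{j,\ell}(t_j)}$, we have $|\hat\mu_{j,\ell}(t_j) - \mu_{j,\ell}| \le w_\ell$ for $\ell\in\{k,k'\}$. This gives
\[
\mathrm{UCB}_{j,k}(t_j) \le \mu_{j,k} + 2w_k, \qquad \mathrm{LCB}_{j,k'}(t_j) \ge \mu_{j,k'} - 2w_{k'}.
\]

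\emph{Step 2: the widths are small.} We use $t_j \le T_j$, so $\log t_j \le \log T_j$, together with $T_{j,\ell}(t_j) > 16\log T_j/\Delta^2$. These give $w_\ell < \sqrt{\Delta^2/16} = \Delta/4$. If $t_j = 1$, the count hypothesis cannot hold for positive counts, so this case is vacuous; in any case $\log t_j \ge 0$, so the bound remains valid.

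\emph{Step 3: conclude.} Combining the two steps,
\[
\mathrm{UCB}_{j,k}(t_j) < \mu_{j,k} + \Delta/2 \le \mu_{j,k'} - \Delta/2 < \mathrm{LCB}_{j,k'}(t_j).
\]

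The only delicate point is Step 1, namely checking that the gap $\Delta$ applies. This needs both arms to be simultaneously available in a round where $p_j$ participates, which holds here. It also needs the deviation bound from $\neg F_i(t_i)$ to be stated in $p_j$'s local time index $t_j = h_j^{-1}(h_i(t_i))$, which matches the definition of $F_i$. The constant $16$ leaves enough slack to absorb the factor $2$ in each width, and the argument goes through without further difficulty.
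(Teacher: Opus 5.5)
Your proof is correct and uses the same ingredients as the paper's. These are the containment of the true means from $\neg F_i(t_i)$ (giving width $2\sqrt{\log t_j/T_{j,\ell}(t_j)}$ on each side), together with $t_j \le T_j$, $\mu_{j,k'}-\mu_{j,k}\ge\Delta$, and the threshold $16\log T_j/\Delta^2$. The paper argues by contradiction (overlapping intervals force $T_j(t_j)\le 16\log T_j/\Delta^2$), whereas you show directly that each width is below $\Delta/4$, which is the same argument run forward.
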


\begin{theorem}\label{thm:optimal_upper_bound}
  Assume $T_i \approx T_j$ for all players $p_i, p_j \in \mathcal{P}$. 
When using Algorithm~\ref{alg:ac_etgs_algorithm}, the player-optimal stable regret for any player $p_i$ is bounded as
\begin{align*}
    \overline{R}_i(T_i)=\mathrm{O}(NK^2\log T_{i}/\Delta^2).
\end{align*}
\end{theorem}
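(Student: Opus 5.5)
We split player $p_i$'s rounds $t_i\in[T_i]$ into three types and bound the contribution of each:
\begin{enumerate}
\item rounds where the failure event $F_i(t_i)$ occurs;
\item exploitation rounds under $\neg F_i(t_i)$;
\item exploration rounds under $\neg F_i(t_i)$.
\end{enumerate}
Every round costs at most $\Delta_{i,\max}\le 1$ in regret. So
\[
\overline{R}_i(T_i)\le \Delta_{i,\max}\,\mathbb{E}\Bigl[\sum_{t_i}\mathbbm{1}\{F_i(t_i)\}\Bigr]+\Delta_{i,\max}\,\mathbb{E}\Bigl[\sum_{t_i}\mathbbm{1}\{\neg F_i(t_i),\ \text{round } h_i(t_i)\text{ is exploration}\}\Bigr]+(\text{exploitation term}).
\]
By Lemma~\ref{lem:badevent}, the first term is at most $4NK$.

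\textbf{Exploitation rounds.} Condition on $\neg F_i(t_i)$ in an exploitation round. For every $p_j\in\mathcal{P}_{h_i(t_i)}$, Step~2 supplies a permutation $\sigma_j$ with $\mathrm{LCB}_{j,\sigma_j(k)}>\mathrm{UCB}_{j,\sigma_j(k+1)}$ for all consecutive pairs. By Lemma~\ref{lem:UCB,LCB}, $\sigma_j$ coincides with the true preference order of $p_j$ over $\mathcal{A}_{h_i(t_i)}$. The separation also forces sorting by UCB to reproduce $\sigma_j$. Hence the player-proposing GS run in Step~3 receives the true preferences and outputs exactly $\overline{m}_{h_i(t_i)}$. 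Its expected reward for $p_i$ then equals $\mu_{i,\overline{m}(i)}$, so the exploitation term contributes zero regret.

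\textbf{Exploration rounds (main obstacle).} Suppose round $h_i(t_i)$ is exploration and $\neg F_i(t_i)$ holds. By Lemma~\ref{lem:times}, some present player $p_j$ must have $T_j(t_j)\le\bar T_j=16\log T_j/\Delta^2$. That is, there is a pair $(p_j,a_k)$ with both available and $T_{j,k}(t_j)\le \bar T_j$; otherwise every present player's order would already be certified.

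I charge such a round to this under-sampled pair. The obstacle is that the random matching may not match $p_j$ to $a_k$, so the counter $T_{j,k}$ need not grow in that round. I would handle this with a probabilistic charging argument. In a uniformly random maximal matching with unit capacities on $\mathcal{A}_t$, a fixed available pair $(p_j,a_k)$ is matched with probability at least $1/|\mathcal{A}_t|\ge 1/K$. One checks this when $|\mathcal{P}_t|\le|\mathcal{A}_t|$; when players outnumber arms, $a_k$ is filled and $p_j$ is chosen with probability $1/|\mathcal{P}_t|\ge 1/N\ge$ something comparable, so I would bound by $1/\max(N,K)=1/K$.

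For each pair $(j,k)$, the expected number of exploration rounds in which $(p_j,a_k)$ is the designated under-sampled pair is then at most $K(\bar T_j+1)$. The reason is that each such round increments $T_{j,k}$ with probability $\ge 1/K$, and once $T_{j,k}>\bar T_j$ the pair can no longer be charged. Formally, this is a geometric/optional-stopping bound on the number of trials needed for $\bar T_j+1$ successes. Note that the global round is counted for $p_j$'s local clock, which is legitimate because $p_j$ is present in that round.

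Summing over $j\in[N]$ and $k\in[K]$ gives at most $NK\cdot K(16\log T_j/\Delta^2+1)$ exploration rounds. Using $T_j\approx T_i$, this is $\mathrm{O}(NK^2\log T_i/\Delta^2)$. Adding the $4NK$ failure term and multiplying by $\Delta_{i,\max}\le1$ yields
\[
\overline{R}_i(T_i)=\mathrm{O}(NK^2\log T_i/\Delta^2).
\]

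A minor subtlety remains. Lemma~\ref{lem:times} needs $\log t_j\le\log T_j$ so that the threshold is uniform, and this is immediate since $t_j\le T_j$.
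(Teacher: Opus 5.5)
Your proposal is correct and follows essentially the same route as the paper. Both arguments bound failure rounds by Lemma~\ref{lem:badevent}, get zero regret in exploitation rounds under $\neg F_i(t_i)$ via Lemma~\ref{lem:UCB,LCB}, and use Lemma~\ref{lem:times} to charge each exploration round to an under-sampled present pair that is matched with probability at least $1/K$, which gives $NK\cdot K\cdot\mathrm{O}(\log T_i/\Delta^2)$. Your explicit geometric charging, with $\bar T_j+1$ and a designated pair in place of a plain union bound, is simply a more careful version of the paper's ``at most $K$ rounds per increment'' step.
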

\begin{proof}
We extend the analysis of ETGS algorithm~\cite{kong2023player} to the AC-ETGS algorithm.
Using the failure event $F_i(t_i)$, the player-optimal regret $\overline{R}_i(T_i)$ for player $p_i$ can be decomposed into two terms, one corresponding to rounds where the confidence bounds hold (main term) and one where they fail (failure term), as follows:
\begin{align}
    \overline{R}_i(T_i) 
    &= \sum_{t_i =1}^{T_i} \mathbb{E}\left[ \mu_{i, \overline{m}_{h_i(t_i)}(i)} - r_{i, m_{h_i(t_i)}(i)}(h_i(t_i)) \right] \nonumber \\
    &\leq \Delta_{i,\max} \cdot \mathbb{E}\left[ \sum_{t_i=1}^{T_i} \mathbbm{1}\{ m_{h_i(t_i)}(i) \neq \overline{m}_{h_i(t_i)}(i) \} \right] \nonumber \\
    &\leq \underbrace{\Delta_{i,\max} \cdot \mathbb{E}\left[ \sum_{t_i=1}^{T_i} \mathbbm{1}\{ m_{h_i(t_i)}(i) \neq \overline{m}_{h_i(t_i)}(i) , \neg F_i(t_i)\} \right]}_{ R_i^{\mathrm{main}}(T_i)} \nonumber\\
    & \quad + \underbrace{\Delta_{i,\max} \cdot \mathbb{E}\left[ \sum_{t_i=1}^{T_i} \mathbbm{1}\{ F_i(t_i) \} \right]}_{ R_i^{\mathrm{fail}}(T_i)}. \label{eq:decomp}
\end{align}
Lemma~\ref{lem:badevent} immediately gives a desired bound on the failure term $R_i^{\mathrm{fail}}(T_i)$, and in the following we concentrate the main term $R_i^{\mathrm{main}}(T_i)$.

Lemma~\ref{lem:UCB,LCB} ensures that conditioned on $\neg F_i(t_i)$, if Algorithm~\ref{alg:ac_etgs_algorithm} enters the exploitation rounds, the GS algorithm operates on a preference list that is consistent with the true preferences, yielding a player-optimal stable matching with zero regret. Therefore, the term $R_i^{\mathrm{main}}(T_i)$ only accumulates nonzero regret during the \textit{exploration} rounds, i.e., when the exploitation condition is not met.

Let $\mathcal{E}(t_i)$ denote the event that the exploitation condition of Algorithm~\ref{alg:ac_etgs_algorithm} is \emph{not} satisfied at round $t_i$. Based on Lemma \ref{lem:times}, under the event $\neg F_i(t_i)$, this $\mathcal{E}(t_i)$ implies that there exists at least one player-arm pair that has not been sampled sufficiently.
Thus, we can bound $R_i^{\mathrm{main}}(T_i)$ as follows:
\begin{align*}
& R_i^{\mathrm{main}}(T_i)\\
&=  \Delta_{i,\max} \cdot \mathbb{E}\left[\sum_{t_i=1}^{T_i} \mathbbm{1}\{ m_{h_i(t_i)}(i)\ne \overline{m}_{h_i(t_i)}(i), \neg F_i(t_i)\}\right] \\
&\le \Delta_{i,\max} \cdot \mathbb{E}\left[\sum_{t_i=1}^{T_i} \mathbbm{1}\{ \mathcal{E}(t_i), \neg F_i(t_i)\}\right] \\
&=  \Delta_{i,\max} \cdot \mathbb{E}\left[\sum_{t_i=1}^{T_i} \mathbbm{1}\left\{\mathcal{E}(t_i), \neg F_i(t_i), \exists (p_{j}, a_k) \in X_{h_i(t_i)} \text{ s.t. } T_{j,k}(t_{j}) \leq \frac{16\log T_{j}}{\Delta^2} \right\}\right]\\
&\leq \Delta_{i,\max} \sum_{p_{j} \in \mathcal{P}, a_k \in \mathcal{A}} \mathbb{E}\left[\sum_{t_i=1}^{T_i} \mathbbm{1}\left\{\mathcal{E}(t_i), \neg F_i(t_i),  (p_{j}, a_k) \in X_{h_i(t_i)}, T_{j,k}(t_{j}) \leq \frac{16\log T_{j}}{\Delta^2} \right\}\right].
\end{align*}

When the event $\mathcal{E}(t_i)$ occurs, the algorithm proceeds to the exploration rounds, where players are matched uniformly at random. Specifically, at round $t_i$, a player $p_{j}$ is matched with an arm $a_k$ with probability at least
\[\frac{1}{\max\{|\mathcal{P}_{h_i(t_i)}|,|\mathcal{A}_{h_i(t_i)}|\}} \geq \frac{1}{K}.\]
Therefore, in the exploration rounds, the expected number of rounds required to increment the counter $T_{j,k}(t_j)$ by 1 is at most $K$.
Since the total number of samples required for any pair $(p_j, a_k)$ is bounded by $16\log T_j / \Delta^2$, we obtain:
\begin{align*}
R_i^{\mathrm{main}}(T_i) &\leq \Delta_{i,\max} \cdot \sum_{p_{j} \in \mathcal{P}, a_k \in \mathcal{A}} \left( K\cdot 16\log T_j/\Delta^2 \right) \\
&\le \Delta_{i,\max} \cdot 16 NK^2 \log T_{i}/\Delta^2.
\end{align*}

Finally, combining the bounds for $R_i^{\mathrm{fail}}(T_i)$ and $R_i^{\mathrm{main}}(T_i)$, we obtain:
\begin{align}
\overline{R}_i(T_i) &\leq R_i^{\mathrm{main}}(T_i) + R_i^{\mathrm{fail}}(T_i) \nonumber \\
&\leq \Delta_{i,\max} \cdot 16 NK^2 \log T_{i}/\Delta^2 + 4NK \Delta_{i,\max} \nonumber \\
&= \mathrm{O} \left(NK^2\log T_{i}/\Delta^2\right). \nonumber
\end{align}
This concludes the proof. %
\qed
\end{proof}

In Appendix~\ref{ap:experiment}, we also present an empirical comparison between the theoretically guaranteed random matching and a heuristic approach using weighted matching during the exploration phase.
The results demonstrate that, in specific situations, the weighted matching approach may further improve the empirical performance compared to the theoretically guaranteed random matching.

\section{Conclusion}
In this paper, we have proposed \textit{Sleeping Competing Bandits}, incorporating the dynamic availability of arms and players into the standard competing bandits framework~\cite{liu2020competing}, and naturally extended the definitions of player-pessimal and player-optimal stable regret.
We have first demonstrated that without specific assumptions, the lower bound is nearly linear for both regret notions.
Consequently, under reasonable structural assumptions, we have proposed an algorithm achieving a player-pessimal stable regret upper bound of $\mathrm{O}(NK\log T_{i}/\Delta^2)$, which is asymptotically optimal.
Furthermore, we have designed a second algorithm achieving a player-optimal stable regret upper bound of $\mathrm{O}(NK^2\log T_{i}/\Delta^2)$.

Determining whether a tighter lower bound exists for player-optimal stable regret, or if an algorithm with a tighter upper bound can be constructed, remains an open question. Finally, while this work focused on stochastic reward distributions, extending the framework to adversarial settings presents an interesting avenue for future research.

\begin{credits}
\subsubsection{\ackname}
This work was supported by JSPS KAKENHI Grant Number JP25H01114 and JST CRONOS Japan Grant Number JPMJCS24K2.

\end{credits}

\bibliographystyle{splncs04} %
\bibliography{ref}           %

\clearpage

\appendix
\counterwithin{theorem}{section}
\counterwithin{lemma}{section}
\counterwithin{definition}{section}
\counterwithin{example}{section}
\counterwithin{equation}{section}
\counterwithin{algorithm}{section}

\chapter*{Supplementary Materials}

\section{Basics on Analysis of Stochastic Multi-Armed Bandit}
\subsection{Hoeffding's Inequality}
In the analysis of stochastic MAB problems, concentration inequalities are often employed to bound the deviation of sample means from true means. 
One such inequality is Hoeffding's inequality \cite{Hoeffding1963}, which provides a bound on the probability that the sum of bounded 
independent random variables deviates from its expected value.
\begin{theorem}[Hoeffding's Inequality \cite{Hoeffding1963}]\label{thm:hoeffding_inequality}
Let $Z_1, \dots, Z_n$ be independent bounded random variables with $Z_i \in [a, b]$ for all $i$, where $-\infty < a \le b < \infty$. Then
\[
\mathbb{P} \left( \frac{1}{n} \sum_{i=1}^n (Z_i - \mathbb{E}[Z_i]) \ge t \right) \le \exp \left( - \frac{2nt^2}{(b-a)^2} \right)
\]
and
\[
\mathbb{P} \left( \frac{1}{n} \sum_{i=1}^n (Z_i - \mathbb{E}[Z_i]) \le -t \right) \le \exp \left( - \frac{2nt^2}{(b-a)^2} \right)
\]
for all $t \ge 0$.
\end{theorem}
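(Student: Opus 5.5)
The plan is the Chernoff--Cramér method: apply Markov's inequality to an exponential moment, use independence to factor the moment generating function, and bound each factor with Hoeffding's lemma. Write $S_n \coloneqq \sum_{i=1}^n (Z_i - \mathbb{E}[Z_i])$. The case $a=b$ is trivial: each $Z_i$ is then a.s.\ constant, so $S_n = 0$ a.s., the event has probability $0$ for $t>0$, and for $t=0$ we read the right-hand side as $1$. So assume $a<b$.

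\textbf{Upper tail.} For any $s>0$, Markov's inequality gives
\[
\mathbb{P}(S_n \ge nt) \le e^{-snt}\,\mathbb{E}\bigl[e^{sS_n}\bigr].
\]
By independence, $\mathbb{E}[e^{sS_n}] = \prod_{i=1}^n \mathbb{E}[e^{s(Z_i-\mathbb{E}[Z_i])}]$. The key ingredient is \emph{Hoeffding's lemma}: if $X\in[a,b]$ almost surely, then
\[
\mathbb{E}\bigl[e^{s(X-\mathbb{E}X)}\bigr] \le \exp\bigl(s^2(b-a)^2/8\bigr).
\]
Combining, we get
\[
\mathbb{P}(S_n\ge nt)\le \exp\bigl(-snt + ns^2(b-a)^2/8\bigr).
\]
Choosing $s = 4t/(b-a)^2$ minimizes the exponent and yields exactly $\exp(-2nt^2/(b-a)^2)$.

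\textbf{Lower tail.} Apply the upper-tail bound to $Z_i' \coloneqq -Z_i \in [-b,-a]$. The range length is still $b-a$, so the identical bound follows.

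\textbf{Proof of Hoeffding's lemma.} This is the main obstacle. First center: set $Y = X - \mathbb{E}X \in [a',b']$ with $a' \le 0 \le b'$ and $b'-a' = b-a$.
\begin{itemize}
\item By convexity of $y\mapsto e^{sy}$ on $[a',b']$, we have $e^{sy} \le \frac{b'-y}{b'-a'}e^{sa'} + \frac{y-a'}{b'-a'}e^{sb'}$. Taking expectations and using $\mathbb{E}Y=0$ gives $\mathbb{E}[e^{sY}] \le e^{\varphi(u)}$, where $u = s(b'-a')$, $p = -a'/(b'-a')\in[0,1]$, and $\varphi(u) = -pu + \log(1-p+pe^{u})$.
\item Check directly that $\varphi(0)=0$ and $\varphi'(0)=0$.
\item Compute $\varphi''(u) = q(1-q)$ with $q = pe^u/(1-p+pe^u) \in [0,1]$, so $\varphi''(u) \le 1/4$.
\item Taylor's theorem with Lagrange remainder then gives $\varphi(u)\le u^2/8 = s^2(b-a)^2/8$, which proves the lemma.
\end{itemize}
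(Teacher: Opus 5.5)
Your argument is correct: the Chernoff--Cram\'er step, the factorization of the moment generating function by independence, Hoeffding's lemma (your convexity-plus-Taylor proof, including $\varphi''(u)=q(1-q)\le 1/4$), the optimizing choice $s=4t/(b-a)^2$, and the reflection $Z_i\mapsto -Z_i$ for the lower tail all check out, and your treatment of the degenerate case $a=b$ is a sensible reading of the statement. The paper does not prove this theorem and simply cites Hoeffding (1963), whose original argument is essentially the one you give.
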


\subsection{Kullback--Leibler Divergence}\label{sec:KL_divergence}
In the analysis of stochastic MAB problems, the Kullback--Leibler (KL) divergence \cite{kullback1951information} is often used to measure the difference between two probability distributions.
\begin{definition}[Kullback--Leibler Divergence \cite{kullback1951information}]\label{def:kl_divergence}
  Let $P$, $Q$ be discrete probability distributions. The \emph{Kullback--Leibler (KL) divergence} from $P$ to $Q$ is defined as
  \[
  \mathrm{D}_{\mathrm{KL}}(P \| Q) = \sum_{x} P(x) \log \frac{P(x)}{Q(x)},
  \]
where $P(x)$ and $Q(x)$ are the probability mass functions of $P$ and $Q$, respectively.
\end{definition}
\begin{example}[Upper Bound on Bernoulli KL Divergence with Specific Means]\label{ex:bernoulli_kl_specific}
  We consider the KL divergence between two specific Bernoulli distributions: $P$ with mean $p = 0.5 - \delta$ and $Q$ with mean $q = 0.5 + \epsilon$, where $\delta, \epsilon > 0$.
  
  Using the inequality $\log y \le y - 1$, we derive an upper bound as follows:
  \begin{align}
  \mathrm{D}_{\mathrm{KL}}(p \| q) &= \sum_{x \in \{0, 1\}} P(x) \log \frac{P(x)}{Q(x)} \nonumber \\
  &\leq \sum_{x \in \{0, 1\}} P(x) \left( \frac{P(x)}{Q(x)} - 1 \right) \nonumber \\
  &= \sum_{x \in \{0, 1\}} \frac{P(x)^2}{Q(x)} - 1 \nonumber \\
  &= \frac{(0.5 - \delta)^2}{0.5 + \epsilon} + \frac{(0.5 + \delta)^2}{0.5 - \epsilon} - 1.
  \end{align}
  
  To simplify this expression, we use the algebraic identity $\sum \frac{P(x)^2}{Q(x)} - 1 = \sum \frac{(P(x)-Q(x))^2}{Q(x)}$.
  Noting that the difference in means is $|P(x) - Q(x)| = \delta + \epsilon$ for both $x=0$ and $x=1$, we have:
  \begin{align}
  \frac{(0.5 - \delta)^2}{0.5 + \epsilon} + \frac{(0.5 + \delta)^2}{0.5 - \epsilon} - 1 
  &= (\delta + \epsilon)^2 \left( \frac{1}{0.5 + \epsilon} + \frac{1}{0.5 - \epsilon} \right) \nonumber \\
  &= (\delta + \epsilon)^2 \left( \frac{(0.5 - \epsilon) + (0.5 + \epsilon)}{0.25 - \epsilon^2} \right) \nonumber \\
  &= \frac{(\delta + \epsilon)^2}{0.25 - \epsilon^2}.
  \end{align}
  This result provides a closed-form upper bound dependent on both $\delta$ and $\epsilon$.
\end{example}

\subsection{Finite-Time Instance-Dependent Lower Bound}\label{sec:change-of-measure}
Here, we present a key inequality used in deriving lower bounds for the expected regret in stochastic MAB problems.
\begin{theorem}[Lower Bound on the Number of Selections of a Suboptimal Arm ({\cite[Lemma 16.3]{lattimore2020bandit}})]\label{thm:lower_bound_suboptimal_arm}
  Let $\nu = (P_i)$ and $\nu' = (P'_i)$ be $K$-armed stochastic MAB instances that differ only in the distribution of the reward for arm 
  $a_i \in \mathcal{A}$. Assume that $a_i$ is suboptimal in $\nu$ and uniquely optimal in $\nu'$. Let $\lambda = \mu_i(\nu') - \mu_i(\nu)$. 
  Then, for any policy $\pi$,
\[
    \mathbb{E}_{\nu, \pi}[T_i(T)] \geq \frac{\log \left( \frac{\min\{\lambda - \Delta_i(\nu), \Delta_i(\nu)\}}{4} \right) + \log(T) - \log(R_{\nu}(\pi, T) + R_{\nu'}(\pi, T))}{\mathrm{D}_{\mathrm{KL}}(P_i \| P'_i)},
\]
where $\mathbb{E}_{\nu, \pi}[T_i(T)]$ is the expected number of selections of a suboptimal arm $a_i$ in $\nu$ if following policy $\pi$, and $\Delta_i(\nu) = \mu_{i^\star}(\nu) - \mu_i(\nu)$ is the gap between the optimal arm and 
the suboptimal arm $a_i$ in $\nu$. 
\end{theorem}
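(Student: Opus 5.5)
The plan is the standard high-probability change-of-measure argument: combine the divergence decomposition with the Bretagnolle--Huber inequality, and then use the regret of $\pi$ on both instances to control the two error probabilities.

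First, fix the horizon $T$. Let $\mathbb{P}_{\nu}$ and $\mathbb{P}_{\nu'}$ denote the laws of the full action/reward history of length $T$ generated by $\pi$ on $\nu$ and $\nu'$, respectively. Since the two instances differ only in the reward law of arm $a_i$, the divergence decomposition (chain rule for KL over the rounds, with the policy's randomness cancelling) gives
\begin{equation*}
\mathrm{D}_{\mathrm{KL}}(\mathbb{P}_{\nu}\,\|\,\mathbb{P}_{\nu'}) = \mathbb{E}_{\nu,\pi}[T_i(T)]\,\mathrm{D}_{\mathrm{KL}}(P_i\,\|\,P'_i).
\end{equation*}
Next, introduce the event $E=\{T_i(T) > T/2\}$. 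The Bretagnolle--Huber inequality yields
\begin{equation*}
\mathbb{P}_{\nu}(E)+\mathbb{P}_{\nu'}(E^c)\ \ge\ \tfrac12 \exp\bigl(-\mathrm{D}_{\mathrm{KL}}(\mathbb{P}_{\nu}\|\mathbb{P}_{\nu'})\bigr).
\end{equation*}

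The second step lower-bounds each regret on the corresponding bad event. In $\nu$, every pull of $a_i$ costs $\Delta_i(\nu)$. On $E$ there are more than $T/2$ such pulls, so $R_\nu(\pi,T)\ge \tfrac{T}{2}\Delta_i(\nu)\,\mathbb{P}_\nu(E)$.

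In $\nu'$, arm $a_i$ has mean $\mu_i(\nu)+\lambda$ and is uniquely optimal. All other means are unchanged, so the best competing arm still has mean $\mu_{i^\star}(\nu)$. Hence every pull of an arm other than $a_i$ costs at least $\lambda-\Delta_i(\nu)>0$. On $E^c$ there are at least $T/2$ such pulls, so $R_{\nu'}(\pi,T)\ge \tfrac{T}{2}(\lambda-\Delta_i(\nu))\,\mathbb{P}_{\nu'}(E^c)$.

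Adding the two bounds and using $\min\{\lambda-\Delta_i(\nu),\Delta_i(\nu)\}$ as a common factor gives
\begin{equation*}
R_\nu(\pi,T)+R_{\nu'}(\pi,T)\ \ge\ \frac{T}{4}\min\{\lambda-\Delta_i(\nu),\Delta_i(\nu)\}\exp\bigl(-\mathbb{E}_{\nu,\pi}[T_i(T)]\,\mathrm{D}_{\mathrm{KL}}(P_i\|P'_i)\bigr).
\end{equation*}
Taking logarithms and solving for $\mathbb{E}_{\nu,\pi}[T_i(T)]$ yields exactly the stated inequality. The degenerate case $\mathrm{D}_{\mathrm{KL}}(P_i\|P_i')=\infty$ is trivial, since then the bound reads $\mathbb{E}_{\nu,\pi}[T_i(T)]\ge 0$.

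The main obstacle is the rigorous justification of the divergence decomposition for an arbitrary (possibly randomized, history-dependent) policy. The approach is to write the joint density of the history as a product of the policy kernels and reward densities. The policy kernels are identical under $\nu$ and $\nu'$ and cancel in the log-likelihood ratio, so only the terms from rounds in which $a_i$ is pulled survive. Taking the expectation under $\nu$ then gives the stated identity, via the tower property and the fact that, given a pull of $a_i$, the reward is distributed as $P_i$ independently of the past. The remaining steps are routine.
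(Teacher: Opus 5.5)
Your proposal is correct and is the same argument the paper points to: it defers to \cite[Lemma 16.3]{lattimore2020bandit}, whose proof combines the divergence decomposition with the Bretagnolle--Huber inequality, lower-bounds $R_\nu(\pi,T)+R_{\nu'}(\pi,T)$ on an event of the form $\{T_i(T)\gtrsim T/2\}$ and its complement, factors out $\min\{\lambda-\Delta_i(\nu),\Delta_i(\nu)\}$, and rearranges. Choosing the strict event $\{T_i(T)>T/2\}$ rather than $\{T_i(T)\ge T/2\}$ is immaterial.
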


This theorem establishes a lower bound on the number of times the arm $a_i$, which is suboptimal in $\nu$, must be selected to achieve sublinear regret in both instances $\nu$ and $\nu'$.
Theorem \ref{thm:lower_bound_suboptimal_arm} is derived by combining the Bretagnolle--Huber inequality (\cite{bretagnolle2006estimation} and \cite[Theorem 14.2]{lattimore2020bandit}) and the divergence decomposition lemma \cite[Lemma 15.1]{lattimore2020bandit}.

\section{Gale--Shapley Algorithm for Stable Matching}\label{sec:GS_Algorithm}
The following algorithm is known to compute the player-optimal stable matching.

\begin{algorithm}[H]
\caption{Gale--Shapley Algorithm (Player-Proposing) \cite{gale1962college}}
\label{alg:gs_algorithm}
\textbf{Input:} An instance $I = (\mathcal{P},  \mathcal{A}, \{\succ_p\}_{p \in \mathcal{P}}, \{(\succ_a, c_a)\}_{a \in \mathcal{A}})$. \\
\textbf{Output:} The player-optimal stable matching $\overline{m}$.

\begin{enumerate}
    \item Initialize set of free players $P=\mathcal{P}$, and for all arms $a \in \mathcal{A}$, let $M_a = \emptyset$ (set of matched players).
    \item While $P$ is not empty, repeat the following:
    \begin{enumerate}
        \item Choose $p\in P$. If $p$ has already proposed to every arm, remove $p$ from $P$. Otherwise do the following procedures.
        \begin{enumerate}
             \item Let $a^\star$ be the highest-ranked arm in $p$'s preference list to whom $p$ has not yet proposed.\\
             $p$ proposes to $a^\star$.
             \item If $|M_{a^\star}| < c_{a^\star}$ (arm $a^\star$ has a vacancy):
             \begin{itemize}
                 \item Add $p$ to $M_{a^\star}$ and remove $p$ from $P$.
             \end{itemize}
             \item If $|M_{a^\star}| = c_{a^\star}$ (arm $a^\star$ is full):
             \begin{itemize}
                 \item Let $p_{\text{worst}}$ be the least preferred player in $M_{a^\star}$ according to $a^\star$'s preference.
                 \item If $a^\star$ prefers $p$ to $p_{\text{worst}}$:
                 \begin{itemize}
                     \item Remove $p_{\text{worst}}$ from $M_{a^\star}$ and add $p_{\text{worst}}$ to $P$.
                     \item Add $p$ to $M_{a^\star}$ and remove $p$ from $P$.
                 \end{itemize}
             \end{itemize}
        \end{enumerate}
    \end{enumerate}
    \item Return the set of matched pairs defined by $\{M_a\}_{a \in \mathcal{A}}$ as $\overline{m}$.
\end{enumerate}
\end{algorithm}

The player-pessimal stable matching is computed analogously by swapping the roles of players and arms, where the capacity of each player is regarded as one and each arm proposes as long as it has a vacant seat and a player remains in its preference list.

\section{Regret Lower Bounds}
\subsection{Regret Lower Bound without Assumptions}\label{ap:general_lowerbound}
Here we give a complete proof of Theorem~\ref{thm:generallowerbound} presented in Section~\ref{sec:lower}.

\restate{Theorem}{thm:generallowerbound}
 For any policy and any constant $c \in (0, 1)$, there exists a problem instance (a collection of reward distributions) such that the player-optimal stable regret $\overline{R}_i(T_i)$ and player-pessimal stable regret $\underline{R}_i(T_i)$ for some player $p_i$ satisfies:
 \[
     \overline{R}_i(T_i)\geq \underline{R}_i(T_i) = \omega(T_i^c).
 \]
\normalfont
\begin{proof}
To derive the lower bound stated in Theorem~\ref{thm:generallowerbound}, we utilize a change-of-measure argument. We construct a reference instance, \textbf{Instance 1}, and a family of alternative instances, $\{\textbf{Instance }k\}_{k}$, one for each competing player $p_k$. 

\textbf{Common Settings.}
We consider a subset of players $\mathcal{P}_1 \subseteq \mathcal{P}$ and a subset of arms, denoted as $\mathcal{A}_1 \subseteq \mathcal{A}$.
We focus on a target player $p_1 \in \mathcal{P}_1$, and rounds that $p_1$ is available, i.e., $H_1\subseteq \mathcal{T}$.
Let $\mathcal{A}_1 = \{a_1, a_2\}$ be available at all rounds in $H_1$, and the capacities of both arms are fixed to $c_1(t_1) = c_2(t_1) = 1$ for all $t_1 \in H_1$.
The availability of the other players $p_k \in \mathcal{P}_1 \setminus \{p_1\}$ is defined to partition the time horizon $T_1$.
Let $L=\mathrm{\omega}(1)$ (w.r.t.~$T_1$) be a block length. For each $p_k \in \mathcal{P}_1 \setminus \{p_1\}$, the set of available rounds $H_k$ is defined as:
\[
H_k = \{ t_1 \in H_1 \mid L \cdot (k-2) < t_1 \le L \cdot (k-1) \}.
\]
In any round $t_1$, exactly one competing player $p_k$ is available alongside $p_1$.

For all rounds $t_1 \in H_1$, both arms $a_1$ and $a_2$ prefer the competing player $p_k$ ($k \ge 2$) over the target player $p_1$.
That is, in the preference ranking of arm $a_j$, we have $p_k \succ_{j} p_1$. This puts $p_1$ at a disadvantage if they compete for the same arm.

\textbf{Unknown Player Preferences (Instance Dependent).}
Let $\delta \in (0, 0.25)$ be a gap parameter. We define the instances based on the mean rewards $\{\mu_{i,j}\}$.

\begin{itemize}
    \item \textbf{Instance 1 (Reference Instance):}
    In this instance, the target player $p_1$ prefers $a_1$, while all competing players prefer $a_2$.
    \begin{itemize}
        \item Player $p_1$: $\mu_{1,1} = 0.5 + \delta$, $\mu_{1,2} = 0.5$.
        \item Competing players $p_j$ ($j \ge 2$): $\mu_{j,1} = 0.5$, $\mu_{j,2} = 0.5 + \delta$.
    \end{itemize}
    The unique stable matching is always $\{(p_1, a_1), (p_j, a_2)\}$.
    
    \item \textbf{Instance $k$ (Alternative Instance for $p_k$):}
    For a specific competing player $p_k$ ($k \ge 2$), we define an alternative instance where only $p_k$'s preference is flipped to favor $a_1$.
    \begin{itemize}
        \item Player $p_k$: $\mu_{k,1} = 0.5 + 2\delta$, $\mu_{k,2} = 0.5 + \delta$. %
        \item All other players $p_j$ ($j \neq k$) have the same rewards as in Instance 1.
    \end{itemize}
    Whenever $j \neq k$, the unique stable matching is $\{(p_1, a_1), (p_j, a_2)\}$, but when $j = k$, it is $\{(p_1, a_2), (p_j, a_1)\}$.
\end{itemize}
\textbf{Regret Analysis}
We analyze the regret of player $p_1$ in Instance 1.
Ideally, in Instance 1, any competing player $p_j$ should select $a_2$.
However, to distinguish Instance 1 from Instance $k$, player $p_k$ must explore $a_1$.

Let $P_{k,1}(1)$ and $P_{k,1}(k)$ be the reward distributions observed by player $p_k$ (when pulling arm $a_1$) in Instance 1 and Instance $k$, respectively.
The key observation will be shown as Lemma~\ref{lem:selection of suboptimal arm}. It implies that if a policy is consistent (i.e., achieves sublinear regret) on both Instance 1 and Instance $k$,
player $p_k$ must select $a_1$ a sufficient number of times in Instance 1 because the distributions of arm $a_1$ for $p_k$ are close in KL-divergence between the two instances.

Whenever $p_k$ selects $a_1$ in Instance 1, $p_1$ is blocked and incurs a regret of $\delta$.
Thus, when $\underline{R}_k(L) = \mathrm{O}(L^c)$ for all $k \ge 2$, the total regret of $p_1$ is lower bounded by summing the expected number of suboptimal selections by each $p_k$:
\begin{align}
\underline{R}_1(T_1) &= \sum_{t_1=1}^{T_1} \left( \mu_{1, \underline{m}_{t_1}(1)} - \mathbb{E}[r_{1, m_{t_1}(1)}(t_1)] \right)  \\
&\geq  \delta \sum_{k=2}^{\lfloor T_1 / L \rfloor} \mathbb{E} [T_{k, 1}(L)] \label{eq:4.2} \\
&\geq \delta \sum_{k=2}^{\lfloor T_1 / L \rfloor} \frac{(1-c)\log L-\log(2C) + \log(\delta/4)}{D_{\mathrm{KL}}(P_{k, 1}(1) \| P_{k, 1}(k))} \label{eq:4.3}\\
&=\mathrm{\Omega}((T_1/L)\cdot \log L)=\mathrm{\Omega}(T_1^c \log T_1)=\mathrm{\omega}(T_1^c), \label{eq:4.4}
\end{align}
where $T_{k, 1}(L) = \sum_{t_1 \in H_k} \mathbbm{1}\{m_{t_1}(k) = a_1\}$ is the number of times $p_k$ selects $a_1$, \eqref{eq:4.3} follows from Lemma \ref{lem:selection of suboptimal arm} below ($C > 0$ is a hidden constant in the bound $\underline{R}_k(L) = \mathrm{O}(L^c)$), and \eqref{eq:4.4} follows by taking $L=\mathrm {\Theta} (T_1^{1-c})=\mathrm{\omega}(1)$.
\qed
\end{proof}

\begin{lemma}\label{lem:selection of suboptimal arm}
  For any policy that achieves $\underline{R}_k(L) \le C L^c$ for some constant $C>0$ in both Instance 1 and Instance $k$, the expected number of suboptimal arm selections by $p_k$ in Instance 1 satisfies:
  \[
  \mathbb{E} [ T_{k, 1}(L) ] \geq \frac{(1-c)\log L-\log(2C) + \log(\delta/4)}{D_{\mathrm{KL}}(P_{k, 1}(1) \| P_{k, 1}(k))}.
  \]
\end{lemma}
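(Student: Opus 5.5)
We will derive Lemma~\ref{lem:selection of suboptimal arm} from Theorem~\ref{thm:lower_bound_suboptimal_arm}, applied to the single-player bandit problem that competitor $p_k$ faces during its block $H_k$ of length $L$. In every round of $H_k$, the arms $a_1,a_2$ rank $p_k$ above $p_1$, so $p_k$ receives whichever arm the platform assigns it. The reward $p_k$ obtains therefore depends only on the arm assigned to $p_k$. Its player-pessimal stable regret $\underline{R}_k(L)$ is then exactly the ordinary two-armed bandit regret of the induced policy $\pi_k$: this is the platform's policy, restricted to $p_k$'s block, with everything that happens outside the block folded in as extra randomization. We should check that in both instances the pessimal stable matching gives $p_k$ its own best arm: $a_2$ in Instance 1, and $a_1$ in Instance $k$. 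This holds because the stable matchings listed in the construction are unique.

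Next we verify the hypotheses of Theorem~\ref{thm:lower_bound_suboptimal_arm} for $\nu$ = Instance 1 and $\nu'$ = Instance $k$, both viewed from $p_k$. The two instances differ only in the distribution of arm $a_1$ for $p_k$: it is $\mathrm{Bernoulli}(0.5)$ in $\nu$ and $\mathrm{Bernoulli}(0.5+2\delta)$ in $\nu'$, while $a_2$ has mean $0.5+\delta$ in both. Hence $a_1$ is suboptimal in $\nu$ with $\Delta_1(\nu)=\delta$, and uniquely optimal in $\nu'$ with $\lambda=2\delta$, so that $\min\{\lambda-\Delta_1(\nu),\Delta_1(\nu)\}=\delta$. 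One subtlety is that the platform's policy can depend on history outside the block. We handle this by conditioning on the history before the block begins. That history has the same law in both instances, since before the block the instances differ only through $p_k$'s reward on $a_1$, which is never observed then. Conditionally, $\pi_k$ is a fixed (randomized) bandit policy. The divergence decomposition behind Theorem~\ref{thm:lower_bound_suboptimal_arm} only needs the two environments to agree outside arm $a_1$ of $p_k$. Rewards of $p_1$ and of other players, along with the availability schedule, have the same laws under both instances, so they act as common exogenous randomness.

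Plugging in, with $T=L$ and $R_\nu(\pi,L)+R_{\nu'}(\pi,L)\le 2CL^c$, gives
\begin{equation*}
\mathbb{E}[T_{k,1}(L)]\ge\frac{\log(\delta/4)+\log L-\log(2CL^c)}{D_{\mathrm{KL}}(P_{k,1}(1)\|P_{k,1}(k))},
\end{equation*}
and this simplifies to exactly the stated bound $\frac{(1-c)\log L-\log(2C)+\log(\delta/4)}{D_{\mathrm{KL}}}$. If the per-block regret bound only holds in expectation over the pre-block history, we apply the theorem conditionally and then average. The right-hand side is concave in the conditional regrets, so by Jensen's inequality, averaging preserves the bound.

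The main obstacle is the reduction step: we must argue rigorously that $p_k$'s regret in the sleeping competing model coincides with a standard bandit regret to which Lemma 16.3 of \cite{lattimore2020bandit} applies, given the platform's global, history-dependent policy. We will first try the conditioning argument above. If that proves awkward, the fallback is to rerun the Bretagnolle--Huber plus divergence-decomposition derivation directly on the full interaction law. In that joint law, the KL divergence between the two instances equals $\mathbb{E}_\nu[T_{k,1}(L)]\,D_{\mathrm{KL}}(P_{k,1}(1)\|P_{k,1}(k))$, because these are the only observations whose distributions differ.
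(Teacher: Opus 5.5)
Your proposal is correct and follows the paper's proof: treat $p_k$'s block as a two-armed bandit and apply Theorem~\ref{thm:lower_bound_suboptimal_arm} with $\nu=$ Instance 1, $\nu'=$ Instance $k$, $\Delta_1(\nu)=\delta$, $\lambda=2\delta$ and $R_\nu+R_{\nu'}\le 2CL^c$; your conditioning and full-interaction-law argument supplies rigor for the reduction that the paper only asserts. One small slip: in the Jensen step, the map $x\mapsto(\log(\delta L/4)-\log x)/D_{\mathrm{KL}}$ is convex and decreasing in the regret sum, not concave, and convexity is exactly what lets averaging over the pre-block history preserve the bound.
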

\begin{proof}
Player $p_k$'s problem is equivalent to a standard stochastic MAB problem with two arms.
We apply Theorem \ref{thm:lower_bound_suboptimal_arm} with $\nu = \text{Instance 1}$ and $\nu' = \text{Instance $k$}$.
\begin{itemize}
    \item Suboptimal arm in $\nu$: $a_1$ (gap $\delta_i(\nu) = \delta$).
    \item Optimal arm in $\nu'$: $a_1$ (mean increases by $\lambda = (0.5+2\delta) - 0.5 = 2\delta$).
\end{itemize}
The numerator term in Theorem \ref{thm:lower_bound_suboptimal_arm} becomes:
\[
\min\{\lambda - \delta_i(\nu), \delta_i(\nu)\} = \min\{2\delta - \delta, \delta\} = \delta.
\]

Using the assumption $R_{\nu} + R_{\nu'} \leq 2CL^c$ (roughly), and substituting into the theorem:
\begin{align}
  \mathbb{E}[T_{k, 1}(L)] & \geq \frac{\log \left( \frac{\delta}{4} \right) + \log L - \log (2CL^c)}{D_{\mathrm{KL}}(P_{k, 1}(1) \| P_{k, 1}(k))} \\
  & = \frac{(1-c)\log L-\log(2C) + \log(\delta/4)}{D_{\mathrm{KL}}(P_{k, 1}(1) \| P_{k, 1}(k))}. \qquad \qed
\end{align}
\end{proof}

\subsection{Regret Lower Bound under Assumptions}\label{ap:lowerbound}
Here we complete the proof of Theorem~\ref{thm:lowerbound} presented in Section~\ref{sec:lower}.

\restate{Theorem}{thm:lowerbound}
Assume $T_i \approx T_j$ for all players $p_i, p_j \in \mathcal{P}$ and $K=\mathrm{O}(\log T_i)$. 
Then, for any $\alpha \in [0, 1)$ and any $\alpha$-consistent policy, there exists a problem instance such that the player-optimal
stable regret $\overline{R}_i(T_i)$ and player-pessimal stable regret $\underline{R}_i(T_i)$ for some player $p_i$ satisfy:
\[
\overline{R}_i(T_i)\geq \underline{R}_i(T_i) = \mathrm{\Omega}\left( N(K-N+1)\log T_{i}/\Delta^2 \right).
\]
\normalfont

First, we provide a lemma that justifies the transition from \eqref{eq:4.8} to \eqref{eq:4.9} in the proof of Theorem~\ref{thm:lowerbound} in the main text.
Recall that we set $c = (1 + \alpha) / 2 \in (\alpha, 1)$ and $\underline{R}_k(|\mathcal{T}_k|) = \mathrm{O}(|\mathcal{T}_k|^c)$ as the policy is $\alpha$-consistent.
\begin{lemma}\label{lem:selection of suboptimal arm 2}
  For any policy that achieves $\underline{R}_k(|\mathcal{T}_k|) \le C |\mathcal{T}_k|^c$ for some constant $C>0$ in both Instance 1 and Instance $(i,k)$, the expected number of suboptimal arm selections by $p_i$ in Instance 1 satisfies:
  \[
  \mathbb{E} [ T_{i, k}(|\mathcal{T}_k|) ] \geq \frac{(1-c)\log |\mathcal{T}_k|-\log(2C) + \log(\epsilon/4)}{D_{\mathrm{KL}}(P_{i,k}(1) \| P_{i,k}(i,k))}. 
  \]
\end{lemma}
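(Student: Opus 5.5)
The plan is to mirror the proof of Lemma~\ref{lem:selection of suboptimal arm}: reduce player $p_i$'s problem on the rounds $\mathcal{T}_k$ to a standard stochastic bandit, then apply Theorem~\ref{thm:lower_bound_suboptimal_arm} with $\nu=$ Instance 1 and $\nu'=$ Instance $(i,k)$.

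\textbf{Setting up the reduction.} Restrict attention to the rounds $\mathcal{T}_k$, where the available arms are $\mathcal{A}_{\text{fix}}\cup\{a_k\}$. The only difference between the two instances is the reward law of the pair $(p_i,a_k)$, so from $p_i$'s perspective the two instances differ only in arm $a_k$. This is exactly the hypothesis of Theorem~\ref{thm:lower_bound_suboptimal_arm}.

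\textbf{Identifying the parameters.} In Instance 1, $a_k$ is suboptimal for $p_i$ with gap $\Delta_k(\nu)=0.5-(0.5-\Delta)=\Delta$. In Instance $(i,k)$, $a_k$ becomes uniquely optimal with mean $0.5+\epsilon$, so $\lambda=\Delta+\epsilon$. Hence
\[
\min\{\lambda-\Delta_k(\nu),\,\Delta_k(\nu)\}=\min\{\epsilon,\Delta\}=\epsilon,
\]
since $\epsilon<0.1\Delta$.

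\textbf{Controlling the regret term.} The denominator inside the logarithm of Theorem~\ref{thm:lower_bound_suboptimal_arm} is the sum of the two single-bandit regrets. I will bound it using the hypothesis $\underline{R}(|\mathcal{T}_k|)\le C|\mathcal{T}_k|^c$ in each instance, which gives $R_\nu+R_{\nu'}\le 2C|\mathcal{T}_k|^c$. Substituting,
\[
\log(\epsilon/4)+\log|\mathcal{T}_k|-\log(2C|\mathcal{T}_k|^c)=(1-c)\log|\mathcal{T}_k|-\log(2C)+\log(\epsilon/4),
\]
which is the claimed numerator. The denominator $D_{\mathrm{KL}}(P_{i,k}(1)\|P_{i,k}(i,k))$ appears unchanged.

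\textbf{Main obstacle.} Theorem~\ref{thm:lower_bound_suboptimal_arm} needs the single-bandit regret of $p_i$ on $\mathcal{T}_k$, measured against $a_i$ in Instance 1 and against $a_k$ in Instance $(i,k)$. What the hypothesis actually controls is a stable-regret quantity. I plan to bridge this gap as follows.

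First, on $\mathcal{T}_k$ the unique stable matching gives $p_i$ its single-bandit optimal arm: $a_i$ in Instance 1 and $a_k$ in Instance $(i,k)$. This holds because $p_i$ outranks $p_N$ and all other $p_j$ sit on their own fixed arms.

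Second, each round in which $p_i$ is not matched to that optimal arm costs $p_i$ at least a constant times the bandit gap. Consequently the bandit regret is $O$ of $p_i$'s stable regret restricted to $\mathcal{T}_k$.

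Third, that restricted stable regret is bounded by $p_i$'s total stable regret, which is $O(T_i^c)$ by $\alpha$-consistency. Since $|\mathcal{T}_k|\ge \lfloor T/(K-N+1)\rfloor$ and $K=O(\log T)$, we have $T_i^c\lesssim |\mathcal{T}_k|^c$ up to constant and polylog factors, and these can be absorbed into $C$ or into lower-order terms.

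Third, the statement writes $\underline{R}_k$ where the regret of $p_i$ on $\mathcal{T}_k$ is meant; I will treat it that way, following the ``(roughly)'' treatment in Lemma~\ref{lem:selection of suboptimal arm}. With this identification, the inequality follows directly.
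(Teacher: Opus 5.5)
Your proposal is correct and follows the paper's proof: you reduce $p_i$'s problem on $\mathcal{T}_k$ to a bandit in which only arm $a_k$ differs, apply Theorem~\ref{thm:lower_bound_suboptimal_arm} with gap $\Delta$ and $\lambda=\Delta+\epsilon$ so the numerator constant is $\epsilon$, and substitute $R_\nu+R_{\nu'}\le 2C|\mathcal{T}_k|^c$. Your extra bridging is sound, since on $\mathcal{T}_k$ the stable partner of $p_i$ is its top available arm in both instances and the restricted stable regret therefore equals the bandit regret exactly; your ``Third'' step deriving the hypothesis from $\alpha$-consistency is not needed for this lemma, and its $(K-N+1)^c$ factor could only be absorbed into lower-order terms, not into the constant $C$.
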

\begin{proof}
Player $p_i$'s problem is equivalent to a standard stochastic MAB problem with two arms.
We apply Theorem \ref{thm:lower_bound_suboptimal_arm} with $\nu = \text{Instance 1}$ and $\nu' = \text{Instance $(i,k)$}$.
\begin{itemize}
    \item Suboptimal arm in $\nu$: $a_k$ (gap: $\Delta(\nu) = \Delta$).
    \item Optimal arm in $\nu'$: $a_k$ (mean increases by: $\lambda = \Delta+\epsilon$).
\end{itemize}
The numerator term in Theorem \ref{thm:lower_bound_suboptimal_arm} becomes:
\[
\min\{\lambda - \Delta(\nu), \Delta(\nu)\} \geq \min \left \{ \epsilon ,  \Delta \right \} =\epsilon.
\]

Using the assumption $R_{\nu} + R_{\nu'} \leq 2C|\mathcal{T}_k|^c$ (roughly), and substituting into the theorem:
\begin{align*}
  \mathbb{E}[T_{i, k}(|\mathcal{T}_k|)] & \geq \frac{\log \left( \frac{\epsilon}{4} \right) + \log (|\mathcal{T}_k|) - \log (2C|\mathcal{T}_k|^c)}{D_{\mathrm{KL}}(P_{i,k}(1) \| P_{i,k}(i,k))} \\
  & = \frac{(1-c)\log |\mathcal{T}_k|-\log(2C) + \log(\epsilon/4)}{D_{\mathrm{KL}}(P_{i,k}(1) \| P_{i,k}(i,k))}. \qquad \qed
\end{align*}
\end{proof}

Finally, we provide the upper bound on KL divergence that justifies the transition from \eqref{eq:4.9} to \eqref{eq:4.10} in the proof of Theorem~\ref{thm:lowerbound} in the main text.
We utilize the specific parameter choices of our hard instance: $\Delta$ is small enough compared to $0.1$, and $\epsilon$ is chosen such that $\epsilon \le 0.1 \Delta$. 
Under these conditions, the KL divergence term in the denominator is bounded as follows (see Example~\ref{ex:bernoulli_kl_specific}):
\begin{align*} 
&D_{\mathrm{KL}}(P_{i,k}(1) \| P_{i,k}(i,k)) \\
    &\le \frac{(\Delta + \epsilon)^2}{0.25 - \epsilon^2} 
    \le \frac{(\Delta + 0.1\Delta)^2}{0.25 - (0.1\Delta)^2} 
    = \frac{1.21 \Delta^2}{0.25 - 0.01\Delta^2} 
    \le \frac{1.21 \Delta^2}{0.249} \le 5 \Delta^2.
\end{align*}

\section{Regret Upper Bounds}
\subsection{Player-Pessimal Stable Regret Upper Bound}\label{ap:pessimal}
Here we complete the proof of Theorem~\ref{thm:pessimal_upper_bound} presented in Section~\ref{sec:upper}.

\restate{Theorem}{thm:pessimal_upper_bound}
Assume $T_i \approx T_j$ for all players $p_i, p_j \in \mathcal{P}$ and $K=\mathrm{O}(\log T_i)$. 
When using Algorithm~\ref{alg:ac_ucb_algorithm}, the player-pessimal stable regret for any player $p_i$ is bounded as
\begin{align*}
    \underline{R}_i(T_i) =\mathrm{O}(NK\log T_{i}/\Delta^2).
\end{align*}
\par\vspace{\topsep}
\normalfont

First, we provide a lemma that justifies the transition from \eqref{eq:5.4} to \eqref{eq:5.5} in the proof of Theorem~\ref{thm:pessimal_upper_bound} in the main text.

\begin{lemma}\label{lem:output of GS}
    For any round $t$ and matching $m_t$ produced by the AC-UCB algorithm (Algorithm~\ref{alg:ac_ucb_algorithm}), 
    there does not exist a blocking triplet of the form $(p_j, a_k, \emptyset)$ for $m_t$.
\end{lemma}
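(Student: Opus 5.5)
The plan is to argue by contradiction, using only the structure of the player-proposing GS algorithm (Algorithm~\ref{alg:gs_algorithm}) run in Step~2 of AC-UCB. Suppose that at round $t$ the output $m_t$ has a blocking triplet $(p_j, a_k, \emptyset)$. Then $p_j \in \mathcal{P}_t$ is unmatched, $a_k \in \mathcal{A}_t$, and, because $\mu_{j,k} > 0 = \mu_{j,0}$ always holds, the first condition is automatic. The second condition of Definition~\ref{def:blocking_triplet_extended} then says that either $|m_t^{-1}(k)| < c_k(t)$, or $a_k$ prefers $p_j$ to its least-preferred current partner under $\succ_{k,t}$. The key point is that this second condition involves only the arms' true rankings $\succ_{k,t}$ and capacities $c_k(t)$. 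These are exactly the inputs GS receives, since the platform knows them. The players' estimated rankings $\sigma_j(t_j)$, which may be wrong, play no role.

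First I would observe that in GS a player is removed from the free set $P$ unmatched only after it has proposed to every arm in its list $\sigma_j(t_j)$. Because $\sigma_j(t_j)$ ranks all of $\mathcal{A}_t$, the unmatched $p_j$ must at some point have proposed to $a_k$. At that moment one of two things happened. Either $a_k$ rejected $p_j$ outright because it was full with players all preferred to $p_j$, or $a_k$ accepted $p_j$ and $p_j$ was later displaced by a more preferred player.

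Second, I would prove the standard monotonicity invariant. Once $a_k$ becomes full, it stays full for the rest of the run, and the least-preferred player in $M_{a_k}$ never gets worse with respect to $\succ_{k,t}$. This holds because, once $a_k$ is full, it changes its set only by swapping out its worst member for a strictly better proposer. Applying the invariant from the moment $p_j$ was rejected or displaced, $a_k$ is full at termination and every member of $m_t^{-1}(k)$ is preferred to $p_j$. This contradicts both alternatives of the second condition.

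The main obstacle is making the displacement case precise. I need that $a_k$ is full at the moment $p_j$ is evicted, and that the worst member afterwards is still better than $p_j$. I would handle this with a short induction over proposal steps. I would also note that ties broken arbitrarily in $\sigma_j$ do not matter, since only the fact that $p_j$ proposed to every arm is used.
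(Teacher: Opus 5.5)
Your proof is correct and takes essentially the same route as the paper. Both argue by contradiction: an unmatched player in player-proposing GS must have proposed to every arm of its complete list $\sigma_j(t_j)$, and the arm-side blocking condition is evaluated with the true $\succ_{k,t}$ and $c_k(t)$ that GS itself receives. Your version is more careful than the paper's, which simply invokes stability of GS with respect to the input preferences. Your full-stays-full and worst-member-only-improves invariant supplies exactly the step the paper leaves implicit.
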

\begin{proof}
    Assume, for the sake of contradiction, that there exists a blocking triplet $(p_j, a_k, \emptyset)$ for the matching $m_t$.
    By definition of a blocking triplet, this implies that player $p_j$ is unmatched ($m_t(j) = 0$) but prefers arm $a_k$ ($\mu_{j,k} > 0$).
    Furthermore, arm $a_k$ is either not full or prefers $p_j$ over one of its current partners.
    
    The AC-UCB algorithm employs the Gale--Shapley (GS) algorithm using the UCB indices as preferences. 
    Since the GS algorithm guarantees stability with respect to the input preferences, and the algorithm submits indices for all arms (implying acceptability), such a blocking triplet cannot occur. 
    Specifically, if $p_j$ preferred $a_k$ based on its index, $p_j$ would have proposed to $a_k$. 
    Given that $a_k$ is available (or prefers $p_j$), the mechanism ensures they would be matched.
    Thus, $p_j$ cannot remain unmatched while such an arm $a_k$ exists, contradicting the assumption.
\qed
\end{proof}

Finally, we provide a lemma on the UCB analysis that justifies the transition from \eqref{eq:expected_L} to \eqref{eq:ucb_bound_result} in the proof of Theorem~\ref{thm:pessimal_upper_bound} in the main text.
\begin{lemma}
\label{lem:standard_ucb_analysis}
    The term in \eqref{eq:expected_L} is upper-bounded as:
    \[
    \mathbb{E}\left[ \sum_{t_j=1}^{T_j} \mathbbm{1}\left\{ \mathrm{UCB}_{j,k}(t_j) \leq \mathrm{UCB}_{j,k'}(t_j) \land m_{h_j(t_j)}(j) = a_{k'} \right\} \right] \leq 4 + \frac{4\log T_j}{\Delta_{j,k,k'}^2}.
    \]
\end{lemma}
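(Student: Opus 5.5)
The plan is the Auer et al.\ argument for UCB1, applied to player $p_j$'s own local clock $t_j$. The event in question requires $m_{h_j(t_j)}(j)=a_{k'}$, so every round counted in the sum is one in which the counter $T_{j,k'}$ is incremented. Only $p_j$'s own samples enter $\mathrm{UCB}_{j,k}$ and $\mathrm{UCB}_{j,k'}$, so the competition with other players, the sleeping pattern and the GS step do not matter; they only decide which rounds are counted.

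First, set $\ell = \lceil 4\log T_j/\Delta_{j,k,k'}^2\rceil$. Once $T_{j,k'}(t_j)\ge \ell$, we have $2\sqrt{\log t_j/T_{j,k'}(t_j)}\le \Delta_{j,k,k'}$. Splitting on whether $T_{j,k'}(t_j)<\ell$, the first part of the sum is at most $\ell \le 1+4\log T_j/\Delta_{j,k,k'}^2$, because each counted round raises $T_{j,k'}$ by one.

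On the remaining rounds, $\mathrm{UCB}_{j,k}(t_j)\le \mathrm{UCB}_{j,k'}(t_j)$ together with $T_{j,k'}(t_j)\ge\ell$ forces one of two failures:
\begin{itemize}
\item[(a)] $\mathrm{UCB}_{j,k}(t_j)\le\mu_{j,k}$, i.e., $\hat\mu_{j,k}$ falls below $\mu_{j,k}$ by more than $\sqrt{\log t_j/T_{j,k}}$;
\item[(b)] $\hat\mu_{j,k'}(t_j)>\mu_{j,k'}+\sqrt{\log t_j/T_{j,k'}}$.
\end{itemize}
Indeed, if neither holds, then $\mathrm{UCB}_{j,k'}<\mu_{j,k'}+2\sqrt{\log t_j/T_{j,k'}}\le\mu_{j,k}<\mathrm{UCB}_{j,k}$, a contradiction. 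If $T_{j,k}(t_j)=0$, then $\mathrm{UCB}_{j,k}=+\infty$ and the event cannot occur.

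To handle the dependence of the counters on the data, bound each failure by a union over all possible sample counts $s\in[t_j]$. By Hoeffding's inequality (Theorem~\ref{thm:hoeffding_inequality}) with a standard reward-table coupling, each such deviation has probability at most $\exp(-2\log t_j)=t_j^{-2}$. Summing over $s$ gives at most $t_j^{-1}$ per round and per failure type, which yields only a $\log T_j$ bound. To obtain a constant, I would instead sum over both counts $s, s'\le t_j$ and use the event $t_j^{-4}$ per pair, as in Auer et al.~\cite{auer2002finite}; this gives $\sum_{t_j}2t_j^2\cdot t_j^{-4}\le 2\pi^2/6\le 3.3$. Adding the first part, the total is at most $4+4\log T_j/\Delta_{j,k,k'}^2$.

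\textbf{Main obstacle: constants.} With confidence width $\sqrt{\log t/n}$ rather than $\sqrt{2\log t/n}$, Hoeffding gives deviation probability $t^{-2}$ for a fixed count. This is not enough to make the double union over $(s,s')$ summable, since it gives $\sum_t t^2\cdot t^{-2}$, which diverges. I would therefore avoid the double union:
\begin{itemize}
\item Bound failure (b) only at the rounds where $p_j$ is actually matched to $a_{k'}$. Its sample index is then determined by the count, so summing over the count $s\ge\ell$ with a peeling/stopping argument gives a convergent series of order $\sum_s s^{-2}$. This uses $t_j\ge s$ and $\exp(-2s\cdot\log t_j/s)$.
\item Bound failure (a) through the single union over $T_{j,k}$, giving $\sum_{t_j} t_j\cdot t_j^{-2}$.
\end{itemize}
If the constant $4$ cannot be recovered exactly, I would accept an $\mathrm{O}(1)$ or $\mathrm{O}(\log T_j)$ additive term. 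That term is absorbed in Theorem~\ref{thm:pessimal_upper_bound}, since the $\log T_j/\Delta^2$ term already dominates.
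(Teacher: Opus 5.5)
\paragraph{Same skeleton, weaker conclusion.} Your decomposition is the paper's. It also splits $\{\mathrm{UCB}_{j,k}(t_j)\le\mathrm{UCB}_{j,k'}(t_j)\}$ into three events: underestimation of $a_k$ (event $A$), overestimation of $a_{k'}$ (event $B$), and too few samples of $a_{k'}$ (event $C$). It bounds $C$ by your observation that every counted round increments $T_{j,k'}$. However, your proposal does not prove the stated inequality. You bound term (a) only by $\sum_{t_j}t_j^{-1}\le 1+\log T_j$, so you obtain $\mathrm{O}(1)+\log T_j+4\log T_j/\Delta_{j,k,k'}^2$ rather than $4+4\log T_j/\Delta_{j,k,k'}^2$. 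Drop your intermediate claim of $t_j^{-4}$ per pair: that rate belongs to Auer et al.'s width $\sqrt{2\log t/n}$. With $\sqrt{\log t/n}$ Hoeffding gives only $t_j^{-2}$, as you note yourself.

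\paragraph{Why the paper's step cannot be borrowed.} The paper gets the constant $2$ for each of (a) and (b) in three moves. It writes $\Pr(T_{j,k}(t_j)=\tau\wedge\ldots)=\Pr(T_{j,k}(t_j)=\tau)\cdot\Pr(\ldots\mid T_{j,k}(t_j)=\tau)$. It then bounds the conditional probability by $t_j^{-2}$ via Hoeffding. Finally it sums $\Pr(T_{j,k}(t_j)=\tau)$ over $\tau$ to $1$, so the factor $t_j$ of the union bound never appears. You were right not to do this. $T_{j,k}(t_j)$ depends on the rewards of $a_k$: a low empirical mean lowers $\mathrm{UCB}_{j,k}$ and delays further pulls. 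So given $T_{j,k}(t_j)=\tau$, the first $\tau$ rewards no longer have their unconditional i.i.d.\ law, and Hoeffding does not bound the conditional probability. Your gap therefore cannot be closed with the paper's step.

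\paragraph{What works and what is missing.} Your treatment of (b) is the rigorous one. Each value $s$ of $T_{j,k'}$ occurs on at most one counted round, and there $t_j\ge s+1$, so (b) $\le\sum_{s\ge1}(s+1)^{-2}<1$; no peeling is needed. For (a) this injectivity fails, because counted rounds pull $a_{k'}$ and $T_{j,k}$ may stay frozen. The missing ingredient is a maximal inequality in place of the union over $\tau$. Write $\hat\mu_{j,k,s}$ for the mean of the first $s$ rewards in the reward-table coupling. Hoeffding's maximal inequality on geometric blocks $s\in[\beta^m,\beta^{m+1})$ gives $\Pr\bigl(\exists s<t_j:\ \hat\mu_{j,k,s}\le\mu_{j,k}-\sqrt{\log t_j/s}\bigr)\le\bigl(1+\log t_j/\log\beta\bigr)\,t_j^{-2/\beta}$. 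This is summable in $t_j$ for any $\beta\in(1,2)$.

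\paragraph{Resulting bound.} The maximal-inequality route yields $C+4\log T_j/\Delta_{j,k,k'}^2$ with an absolute constant $C$. The constant this crude bound gives is well above $2$ for term (a), so even this rigorous argument recovers the lemma only up to the additive constant. The literal $4$ rests on the unjustified conditioning. Either your $\log T_j$ version or the constant-$C$ version suffices for Theorem~\ref{thm:pessimal_upper_bound}, since $\log T_j\le\log T_j/\Delta_{j,k,k'}^2$. Neither, however, is the statement as written.
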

\begin{proof}
    Let $T_{j,k}(t_j)$ denote the number of times arm $k$ has been selected by player $j$ up to round $t_j$.
    We define the following events regarding the estimation of arms:
    \begin{itemize}
        \item Event $A_{j,k,k'}(t_j)$ (Underestimation of optimal arm $a_k$):
        \[
        A_{j,k,k'}(t_j) = \left\{ \hat{\mu}_{j,k} \leq \mu_{j,k} - c_{t_j, T_{j,k}(t_j)} \land m_{h_j(t_j)}(j) = a_{k'} \right\}.
        \]
        \item Event $B_{j,k,k'}(t_j)$ (Overestimation of suboptimal arm $a_{k'}$):
        \[
        B_{j,k,k'}(t_j) = \left\{ \hat{\mu}_{j,k'} \geq \mu_{j,k'} + c_{t_j, T_{j,k'}(t_j)} \land m_{h_j(t_j)}(j) = a_{k'} \right\}.
        \]
        \item Event $C_{j,k,k'}(t_j)$ (Small gap implies insufficient samples):
        \[
        C_{j,k,k'}(t_j) = \left\{ \mu_{j,k} < \mu_{j,k'} + 2c_{t_j, T_{j,k'}(t_j)} \land m_{h_j(t_j)}(j) = a_{k'} \right\},
        \]
    \end{itemize}
    where $c_{t,s} = \sqrt{\frac{ \log t}{s}}$ is the confidence radius.
    The inequality $\text{UCB}_{j,k}(t_j) \leq \text{UCB}_{j,k'}(t_j)$ implies that at least one of the events of these three types must occur.
    Thus, we can decompose the expectation as:
    \begin{align}
        &\mathbb{E}\left[ \sum_{t_j=1}^{T_j} \mathbbm{1}\left\{ \text{UCB}_{j,k}(t_j) \leq \text{UCB}_{j,k'}(t_j) \land m_{h_j(t_j)}(j) = a_{k'} \right\} \right] \nonumber \\
        &\leq \underbrace{\mathbb{E}\left[ \sum_{t_j=1}^{T_j} \mathbbm{1}\{ A_{j,k,k'}(t_j) \} \right]}_{(a)} 
        + \underbrace{\mathbb{E}\left[ \sum_{t_j=1}^{T_j} \mathbbm{1}\{ B_{j,k,k'}(t_j) \} \right]}_{(b)} 
        + \underbrace{\mathbb{E}\left[ \sum_{t_j=1}^{T_j} \mathbbm{1}\{ C_{j,k,k'}(t_j) \} \right]}_{(c)}.
    \end{align}
    
    First, we bound term $(a)$ as follows:
 \begin{align}
    (a) &\le \mathbb{E}\left[ \sum_{t_j=1}^{T_j} \mathbbm{1}\left\{ \hat{\mu}_{j,k} \leq \mu_{j,k} - c_{t_j, T_{j,k}(t_j)} \right\} \right] \nonumber \\
    &= \sum_{t_j=1}^{T_j} \sum_{\tau=1}^{t_j} \mathbb{P} \left( T_{j,k}(t_j)=\tau \land \hat{\mu}_{j,k} \leq \mu_{j,k} - \sqrt{\frac{\log t_j}{\tau}} \right) \nonumber \\
    &= \sum_{t_j=1}^{T_j} \sum_{\tau=1}^{t_j} \mathbb{P}(T_{j,k}(t_j)=\tau) \cdot \mathbb{P} \left( \hat{\mu}_{j,k} \leq \mu_{j,k} - \sqrt{\frac{\log t_j}{\tau}} \;\middle|\; T_{j,k}(t_j)=\tau \right) \nonumber \\
    &\leq \sum_{t_j=1}^{T_j} \sum_{\tau=1}^{t_j} \mathbb{P}(T_{j,k}(t_j)=\tau) \cdot \exp(-2 \log t_j) \label{eq:hoeffding_tight} \\
    &= \sum_{t_j=1}^{T_j} t_j^{-2} \underbrace{\sum_{\tau=1}^{t_j} \mathbb{P}(T_{j,k}(t_j)=\tau)}_{{} = 1} \nonumber \\
    &\le \sum_{t_{j}=1}^{\infty} t_j^{-2} \le 2, \label{eq:bound_a}
\end{align}
    where \eqref{eq:hoeffding_tight} is derived from the Hoeffding inequality (Theorem~\ref{thm:hoeffding_inequality}).
    Similarly, %
    \begin{align}
        (b) \leq 2. \label{eq:bound_b}
    \end{align}
    
    Finally, we bound term $(c)$. The event $C_{j,k,k'}(t_j)$ implies
    \begin{align*}
        \Delta_{j,k,k'} < 2\sqrt{\frac{\log t_j}{T_{j,k'}(t_j)}},
    \end{align*}
    which is rephrased as
    \begin{align*}
        T_{j,k'}(t_j) < \frac{4 \log t_j}{\Delta_{j,k,k'}^2}.
    \end{align*}
    Since $T_{j,k'}(t_j)$ increments each time the event occurs (as $m(j)=k'$), the total count is bounded by:
    \begin{align}
        (c) \leq \frac{4 \log T_j}{\Delta_{j,k,k'}^2}. \label{eq:bound_c}
    \end{align}
    Combining \eqref{eq:bound_a}, \eqref{eq:bound_b}, and \eqref{eq:bound_c} completes the proof. \qed
\end{proof}

\subsection{Player-Optimal Stable Regret Upper Bound}\label{ap:optimal}
Here we complete the proof of Theorem~\ref{thm:optimal_upper_bound} presented in Section~\ref{sec:upper}.

\restate{Theorem}{thm:optimal_upper_bound}
Assume $T_i \approx T_j$ for all players $p_i, p_j \in \mathcal{P}$. 
When using Algorithm~\ref{alg:ac_etgs_algorithm}, the player-optimal stable regret for any player $p_i$ is bounded as
\begin{align*}
    \overline{R}_i(T_i)=\mathrm{O}(NK^2\log T_{i}/\Delta^2).
\end{align*}
\par\vspace{\topsep}
\normalfont

We provide proofs of lemmas used in the proof of Theorem~\ref{thm:optimal_upper_bound} in main text. 
Recall that we define the failure event $F_i(t_i)$ for player $p_i$ at round $t_i$ as follows:
\begin{equation*}
    F_i(t_i) = \left \{ \exists (p_{j}, a_k) \in X_{h_i(t_i)} \text{ s.t. } |\hat{\mu}_{j,k}(t_{j}) - \mu_{j,k}| > \sqrt{\frac{\log t_{j}}{T_{j,k}(t_{j})}} \right \},
\end{equation*}
where $X_{h_i(t_i)} = \mathcal{P}_{h_i(t_i)} \times \mathcal{A}_{h_i(t_i)}$ is the set of pairs $(p_j, a_k)$ and $t_j = h_j^{-1}(h_i(t_i))$.

\restate{Lemma}{lem:badevent}
The expected number of rounds where $F_i(t_i)$ occurs is bounded by:
\[
\mathbb{E}\left[\sum_{t_i=1}^{T_i}\mathbbm{1}\{F_i(t_i)\}\right]\leq 4NK.
\]
\par\vspace{\topsep}
\normalfont
\begin{proof}
By applying the union bound over all players $p_j \in \mathcal{P}$ and arms $a_k \in \mathcal{A}$, we have:
\begin{align}
&\mathbb{E}\left[\sum_{t_i=1}^{T_i} \mathbbm{1}\{F_i(t_i)\}\right]\nonumber\\
&\leq \sum_{p_{j} \in \mathcal{P}, a_k \in \mathcal{A}}  \mathbb{E}\left[\sum_{t_i=1}^{T_i} \mathbbm{1}\left\{ |\hat{\mu}_{j,k}(t_{j}) - \mu_{j,k}| > \sqrt{\frac{ \log t_{j}}{ T_{j,k}(t_{j})}} \land (p_{j} ,a_{k})\in X_{h_i(t_i)}\right\}\right] \nonumber \\
&\leq \sum_{p_{j} \in \mathcal{P}, a_k \in \mathcal{A}}  \mathbb{E}\left[\sum_{t_{j}=1}^{T_{j}} \mathbbm{1}\left\{ |\hat{\mu}_{j,k}(t_{j}) - \mu_{j,k}| > \sqrt{\frac{ \log t_{j}}{ T_{j,k}(t_{j})}}\right\}\right]. \label{eq:D6}
\end{align}
We can rewrite the expectation by summing over the possible values of the counter $T_{j,k}(t_j)$:
\begin{align}
&\mathbb{E}\left[\sum_{t_{j}=1}^{T_{j}} \mathbbm{1}\left\{ |\hat{\mu}_{j,k}(t_{j}) - \mu_{j,k}| > \sqrt{\frac{ \log t_{j}}{ T_{j,k}(t_{j})}}\right\}\right] \nonumber \\
&= \sum_{t_{j}=1}^{T_{j}} \sum_{\tau=1}^{t_{j}} \mathbb{P}\left(T_{j,k}(t_{j})=\tau \land |\hat{\mu}_{j,k}(t_{j}) - \mu_{j,k}| > \sqrt{\frac{ \log t_{j}}{ T_{j,k}(t_{j})}}\right) \nonumber \\
&= \sum_{t_{j}=1}^{T_{j}} \sum_{\tau=1}^{t_{j}} \mathbb{P}(T_{j,k}(t_{j})=\tau) \cdot \mathbb{P}\left(|\hat{\mu}_{j,k}(t_{j}) - \mu_{j,k}| > \sqrt{\frac{ \log t_{j}}{\tau}} \;\middle|\; T_{j,k}(t_{j})=\tau\right).
\end{align}
Applying Hoeffding's inequality as in the proof of Lemma~\ref{lem:standard_ucb_analysis}, we obtain:
\begin{align}
\text{RHS of \eqref{eq:D6}} &\leq \sum_{p_{j} \in \mathcal{P}, a_k \in \mathcal{A}} \sum_{t_{j}=1}^{T_{j}} \sum_{\tau=1}^{t_{j}} \mathbb{P}(T_{j,k}(t_{j})=\tau) \cdot 2\exp(- 2\log t_{j}) \nonumber \\
&= \sum_{p_{j} \in \mathcal{P}, a_k \in \mathcal{A}} \sum_{t_{j}=1}^{T_{j}} 2t_j^{-2} \cdot \underbrace{\sum_{\tau=1}^{t_{j}} \mathbb{P}(T_{j,k}(t_{j})=\tau)}_{=1} \nonumber \\
&\leq \sum_{p_{j} \in \mathcal{P}, a_k \in \mathcal{A}} 2 \sum_{t_{j}=1}^{\infty} t_j^{-2} \nonumber \\
&\leq \sum_{p_{j} \in \mathcal{P}, a_k \in \mathcal{A}} 4 = 4NK. \nonumber \qquad \qed
\end{align}
\end{proof}

\restate{Lemma}{lem:UCB,LCB}
Conditioned on the event $\neg F_i(t_i)$, for any player $p_{j} \in \mathcal{P}_{h_i(t_i)}$ at round $t_i$, if $\mathrm{UCB}_{j,k}(t_{j}) < \mathrm{LCB}_{j,k'}(t_{j})$, then the true means satisfy $\mu_{j,k} < \mu_{j,k'}$.
\par\vspace{\topsep}
\normalfont
\begin{proof}
Under the event $\neg F_i(t_i)$, the true mean lies within the confidence interval defined by LCB and UCB. Specifically, for any relevant arm $k$:
\[
\mathrm{LCB}_{j,k}(t_{j}) = \hat{\mu}_{j,k}(t_{j}) - \sqrt{\frac{\log t_{j}}{T_{j,k}(t_{j})}} \le \mu_{j,k} \le \hat{\mu}_{j,k}(t_{j}) + \sqrt{\frac{\log t_{j}}{T_{j,k}(t_{j})}} = \mathrm{UCB}_{j,k}(t_{j}).
\]
Therefore, the condition $\mathrm{UCB}_{j,k}(t_{j}) < \mathrm{LCB}_{j,k'}(t_{j})$ implies the following chain of inequalities:
\[
\mu_{j,k} \le \mathrm{UCB}_{j,k}(t_{j}) < \mathrm{LCB}_{j,k'}(t_{j}) \le \mu_{j,k'}.
\]
Thus, we conclude that $\mu_{j,k} < \mu_{j,k'}$.
\qed
\end{proof}

\restate{Lemma}{lem:times}
Consider player $p_i$ at round $t_i$. For any player $p_{j} \in \mathcal{P}_{h_i(t_i)}$, let $T_{j}(t_j) = \min_{k \in \mathcal{A}_{h_i(t_i)}} T_{j,k}(t_{j})$. Define the threshold $\bar{T}_{j} = 16\log T_{j} / \Delta^2$.
Conditioned on the event $\neg F_i(t_i)$, if $T_{j}(t_j) > \bar{T}_{j}$, then for any pair of arms $k, k'$ such that $\mu_{j,k} < \mu_{j,k'}$, the inequality $\mathrm{UCB}_{j,k}(t_{j}) < \mathrm{LCB}_{j,k'}(t_{j})$ holds.
\par\vspace{\topsep}
\normalfont 

\begin{proof}
We proceed by contradiction. Assume there exist arms $k, k'$ such that $\mu_{j,k} < \mu_{j,k'}$ but the confidence intervals overlap or are inverted, i.e., $\mathrm{UCB}_{j,k}(t_{j}) \ge \mathrm{LCB}_{j,k'}(t_{j})$.
Given $\neg F_i(t_i)$, the true means are contained within the confidence bounds:
\[
\mu_{j,k'} - 2\sqrt{\frac{\log t_{j}}{T_{j}(t_{j})}} \le \mathrm{LCB}_{j,k'}(t_{j}) \le \mathrm{UCB}_{j,k}(t_{j}) \le \mu_{j,k} + 2\sqrt{\frac{\log t_{j}}{T_{j}(t_{j})}}.
\]
This inequality implies that the gap between the means satisfies:
\[
\Delta_{j,k,k'} = \mu_{j,k'} - \mu_{j,k} \le 4\sqrt{\frac{\log t_{j}}{T_{j}(t_{j})}}.
\]
Rearranging for $T_{j}(t_{j})$, we get:
\[
T_{j}(t_{j}) \le \frac{16\log t_{j}}{\Delta_{j,k,k'}^2} \le \frac{16\log T_{j}}{\Delta^2} = \bar{T}_j.
\]
This contradicts the hypothesis that $T_{j}(t_{i}) > \bar{T}_{j}$. Thus, the confidence intervals must be separated correctly.
\qed
\end{proof}

\section{Experimental Details}\label{ap:experiment}
\subsection{Weighted Exploration}
In our experiments, we compared our proposed AC-ETGS algorithm with a variant that employs maximum-weight matchings instead of uniform random matchings during the exploration rounds. 
This approach actively prioritizes player-arm pairs that have been under-explored, which is intuitively expected to improve the exploration efficiency.
Specifically, let $N_{i,j}(t)$ denote the number of times player $i$ has successfully pulled arm $j$ up to round $t\in \mathcal{T}$. We define the matching weight for this pair as $w_{i,j}(t) = \frac{1}{N_{i,j}(t) + 1}$. During each exploration round, the central platform computes a matching that maximizes the sum of weights among all currently available players and arms. This maximum weight bipartite matching problem is efficiently solved using the Hungarian algorithm~\cite{kuhn1955hungarian}. 

\subsection{Experimental Results}
\begin{figure}[htbp]
\centering
\includegraphics[width=\linewidth]{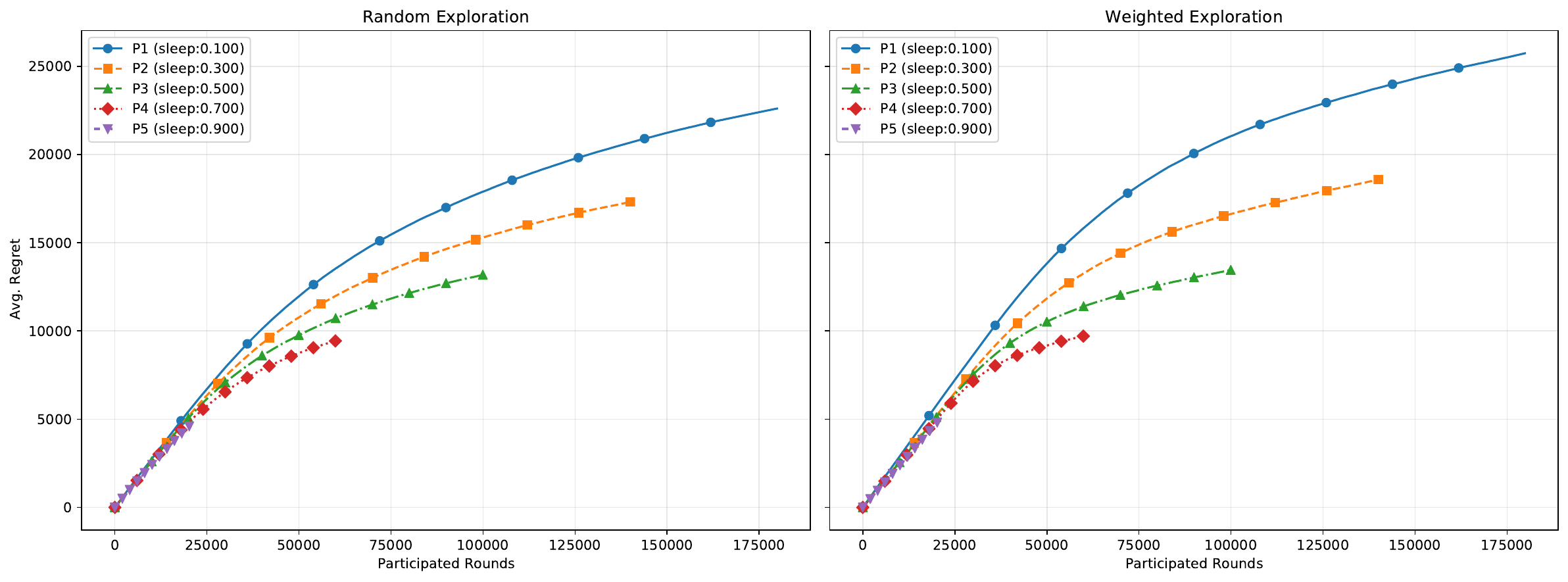}
\caption{Regret comparison between random and weighted exploration with heterogeneous player unavailability probabilities.}
  \label{fig:regret_comparison1}
\end{figure}

\begin{figure}[htbp]
\centering
\includegraphics[width=\linewidth]{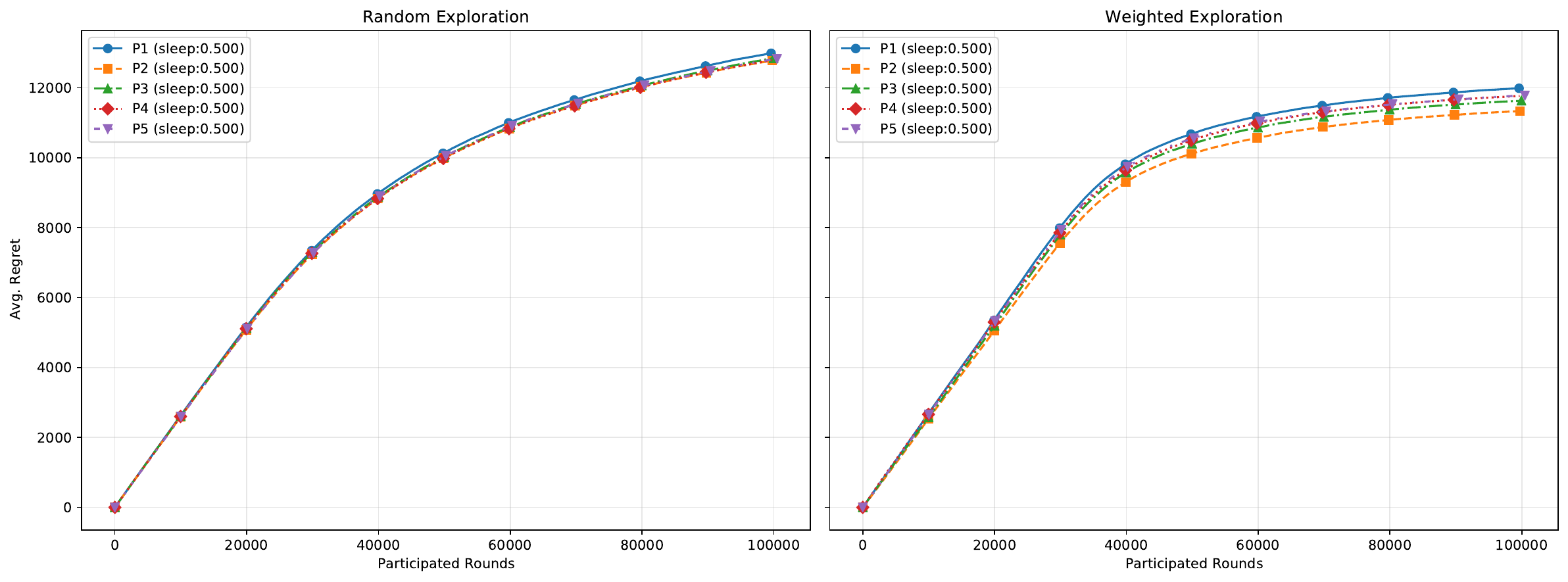}
\caption{Regret comparison between random and weighted exploration with identical player unavailability probabilities.}
\label{fig:regret_comparison2}
\end{figure}

\begin{figure}[htbp]
\centering
\includegraphics[width=\linewidth]{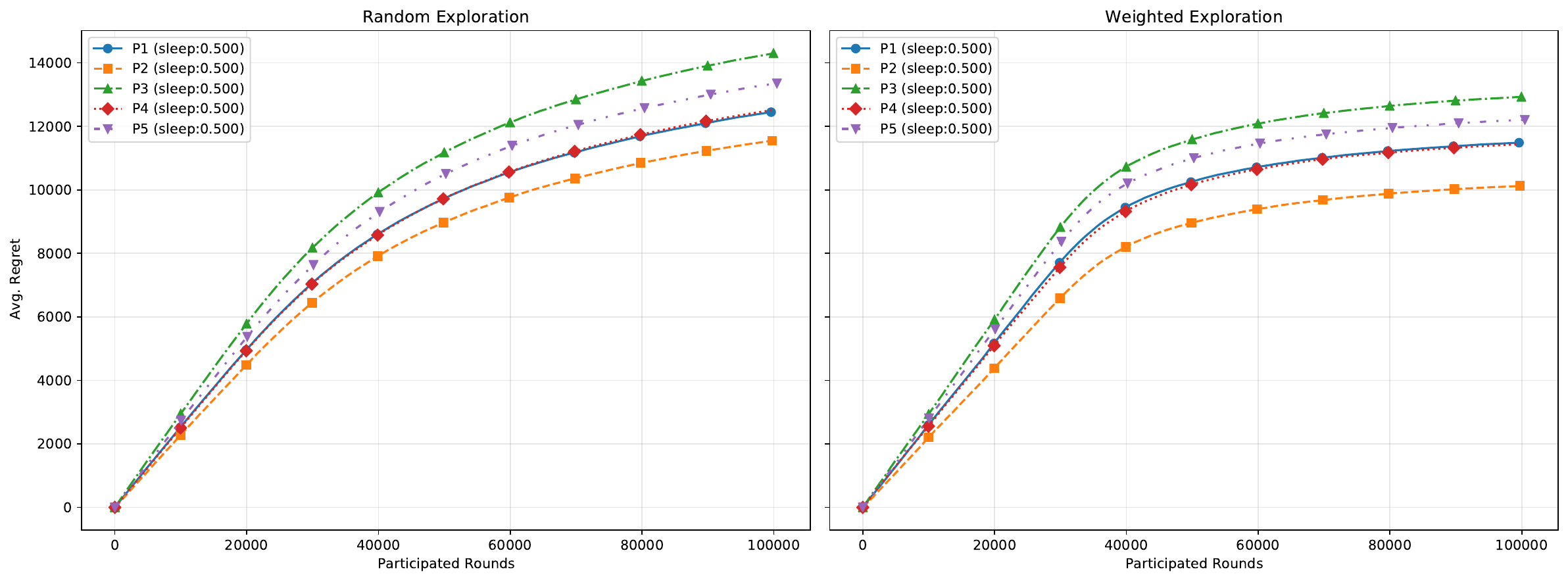}
\caption{Regret comparison between random and weighted exploration with identical player unavailability probabilities, where the preferences of arms are fixed.}
\label{fig:regret_comparison3}
\end{figure}

\textbf{Fig.~\ref{fig:regret_comparison1}.}\quad Empirical evaluation of the proposed AC-ETGS algorithms in the Sleeping Competing Bandits framework ($N=5$ players, $K=10$ arms, horizon $T=2 \times 10^5$). The evaluation is conducted over 50 distinct problem instances, with 50 independent trials performed for each instance. For each instance, the expected rewards $\mu_{i, j}$ are set to $K$ linearly spaced values in the interval $[0.1, 0.9]$ and randomly permuted across the arms. For each arm, its preference order over the players is determined randomly.
The unavailability probabilities are linearly spaced in $[0.1, 0.9]$ for both players and arms. Notably, within each instance, the environment parameters (expected rewards, arm preferences, and sleeping probabilities) and the temporal sequence of player/arm availability are generated once and fixed across its 50 trials. Thus, the variance within an instance arises solely from the stochastic reward realizations and the algorithms' internal randomness. The final reported results are obtained by first averaging over the 50 trials for each instance, and then averaging these outcomes across all 50 instances.
\vspace{\baselineskip}\\
\textbf{Fig.~\ref{fig:regret_comparison2}.}\quad Regret comparison under the same setting as Fig.~\ref{fig:regret_comparison1}, except the unavailability probabilities for all five players are fixed at $0.5$.
\vspace{\baselineskip}\\
\textbf{Fig.~\ref{fig:regret_comparison3}.}\quad Regret comparison under the same setting as Fig.~\ref{fig:regret_comparison2}, with the exception that the preference orders of arms over players is randomly initialized and kept constant across all instances and trials. 

Comparing Fig.~\ref{fig:regret_comparison2} and Fig.~\ref{fig:regret_comparison3} reveals that varying arm preferences across instances in Fig.~\ref{fig:regret_comparison2} effectively averages out the differences among players, rendering them more symmetric.
\end{document}